\documentclass{article}
\usepackage[T1]{fontenc} 
\usepackage{microtype}
\usepackage{graphicx}
\usepackage{subcaption}
\usepackage{booktabs} 

\usepackage{hyperref}
\usepackage{wrapfig}

\usepackage[preprint]{icml2026}

\usepackage{amsmath}
\usepackage{amssymb}
\usepackage{amsfonts}
\usepackage{mathtools}
\usepackage{amsthm}
\usepackage{mathrsfs}%
\usepackage{array}

\usepackage[capitalize,noabbrev]{cleveref}

\theoremstyle{plain}
\newtheorem{theorem}{Theorem}[section]
\newtheorem{proposition}[theorem]{Proposition}
\newtheorem{lemma}[theorem]{Lemma}

\theoremstyle{definition}

\newtheorem{assumption}[theorem]{Assumption}
\theoremstyle{remark}
\newtheorem{remark}[theorem]{Remark}

\usepackage{multirow}%
\usepackage{makecell}
\usepackage{colortbl}
\usepackage{xcolor}
\usepackage{caption}
\definecolor{tabgray}{gray}{0.90}
\definecolor{grayblue}{RGB}{96, 125, 139}
\usepackage{bbding}
\usepackage{utfsym}
\usepackage{pifont}

\usepackage{xspace}
\newcommand{\ours}{LOFT\xspace}
\newcommand{\hypo}{(\textbf{\textit H})\xspace}

\newcommand{\fgdata}{\mathcal{D}_{\rm fg}\xspace}
\newcommand{\rmdata}{\mathcal{D}_{\rm rm}\xspace}
\newcommand{\fgpdata}{\mathcal{D}_{\rm fgp}\xspace}

\newcommand{\exactf}{f_{\rm exact}\xspace}
\newcommand{\pref}{f_{\rm pre}\xspace}
\newcommand{\exactg}{g_{\rm exact}\xspace}
\newcommand{\preg}{g_{\rm pre}\xspace}
\newcommand{\exacth}{h_{\rm exact}\xspace}
\newcommand{\preh}{h_{\rm pre}\xspace}

\newcommand{\exactsigmarm}{{\bf\Sigma}_{\rm rm}^{\rm exact}\xspace}
\newcommand{\presigmarm}{{\bf\Sigma}_{\rm rm}^{\rm pre}\xspace}
\newcommand{\exactsigmafg}{{\bf\Sigma}_{\rm fg}^{\rm exact}\xspace}
\newcommand{\presigmafg}{{\bf\Sigma}_{\rm fg}^{\rm pre}\xspace}
\newcommand{\presigmafgp}{{\bf\Sigma}_{\rm fgp}^{\rm pre}\xspace}

\newcommand{\featrm}{{\bf Z}_{\rm rm}\xspace}
\newcommand{\featfg}{{\bf Z}_{\rm fg}\xspace} 

\newcommand{\exactfeatrm}{{\bf Z}_{\rm rm}^{\rm exact}\xspace}
\newcommand{\prefeatrm}{{\bf Z}_{\rm rm}^{\rm pre}\xspace}
\newcommand{\exactfeatfg}{{\bf Z}_{\rm fg}^{\rm exact}\xspace}
\newcommand{\prefeatfg}{{\bf Z}_{\rm fg}^{\rm pre}\xspace}

\newcommand{\projrm}{{\bf U}_{\rm rm}\xspace}

\newcommand{\objfg}{{J}_{\rm fg}\xspace}
\newcommand{\objrm}{{J}_{\rm rm}\xspace}
\newcommand{\objfgp}{{J}_{\rm fgp}\xspace}

\usepackage[textsize=tiny]{todonotes}

\icmltitlerunning{Machine Unlearning in Low-Dimensional Feature Subspace}

\begin{document}

\twocolumn[
  \icmltitle{Machine Unlearning in Low-Dimensional Feature Subspace}



  \icmlsetsymbol{equal}{*}

  \begin{icmlauthorlist}
    \icmlauthor{Kun Fang}{equal,polyu}
    \icmlauthor{Qinghua Tao}{equal,bit}
    \icmlauthor{Junxu Liu}{polyu}
    \icmlauthor{Yaxin Xiao}{polyu}
    \icmlauthor{Qingqing Ye}{polyu}
    \icmlauthor{Jian Sun}{bit}
    \icmlauthor{Haibo Hu}{polyu}
  \end{icmlauthorlist}

  \icmlaffiliation{polyu}{Department of Electrical and Electronic Engineering, The Hong Kong Polytechnic University, Hong Kong, China}
  \icmlaffiliation{bit}{School of Automation, Beijing Institute of Technology, Beijing, China}

  \icmlcorrespondingauthor{Jian Sun}{sunjian@bit.edu.cn}
  \icmlcorrespondingauthor{Haibo Hu}{haibo.hu@polyu.edu.hk}

  \icmlkeywords{machine unlearning, low-dimensional feature subspace, learnable PCA}

  \vskip 0.3in
]



\printAffiliationsAndNotice{\icmlEqualContribution}

\begin{abstract}
Machine Unlearning (MU) aims at removing the influence of specific data from a pretrained model while preserving  performance on the remaining data.
In this work, a novel perspective for MU is presented upon low-dimensional feature subspaces, which gives rise to the potentials of separating the remaining and forgetting data herein.
This separability motivates our \ours, a method that proceeds unlearning in a LOw-dimensional FeaTure subspace from the pretrained model skithrough principal projections, which are optimized to maximally capture the information of the remaining data and meanwhile diminish that of the forgetting data. 
In training, \ours  simply optimizes a small-size projection matrix flexibly plugged into the pretrained model, and only requires one-shot feature fetching from the pretrained backbone instead of repetitively accessing the raw data.
Hence, \ours mitigates two critical issues in mainstream MU methods, i.e., the privacy leakage risk from massive data reload and the inefficiency of updates to the entire pretrained model.
Extensive experiments validate the significantly lower computational overhead and superior unlearning performance of \ours across diverse models, datasets, tasks, and applications. 
Code is anonymously available at \href{https://anonymous.4open.science/r/4352/}{https://anonymous.4open.science/r/4352/}.
\end{abstract}

\section{Introduction}
\label{sec:intro}
Deep Neural Networks (DNNs) pretrained on large-scale data, e.g., web-crawled data,  greatly advocate various applications \cite{schuhmann2022laion,zha2025data}. However, it also raises significant privacy concerns due to its heavy reliance on massive data.
Modern data regulatory frameworks, such as GDPR \cite{hoofnagle2019european}, emphasizes to preserve {\it the right to be forgotten} when requesting to delete particular data, e.g., private user data.
To address this challenge, {Machine Unlearning (MU)} \cite{bourtoule2021machine} has emerged and  attracted increasing interests in recent years.
MU targets on dual goals, i.e., erasing the knowledge from the deleted (forgotten) data and meanwhile preserving the model performance.
A gold-standard approach to MU is to retrain the model from scratch based on the remaining data only, i.e., the entire training data excluding the forgetting data \cite{thudi2022necessity}, which is called  {\it the exact MU}. 
Such a retraining process is computationally expensive, especially for modern DNNs which are commonly large-scale models trained on massive data.
Hence, a surge of researches on {\it approximate MU} \cite{izzo2021approximate} has been developed to approach the ideal performance of the retrained model.

\begin{figure*}[t]
    \centering
    \includegraphics[width=0.95\linewidth]{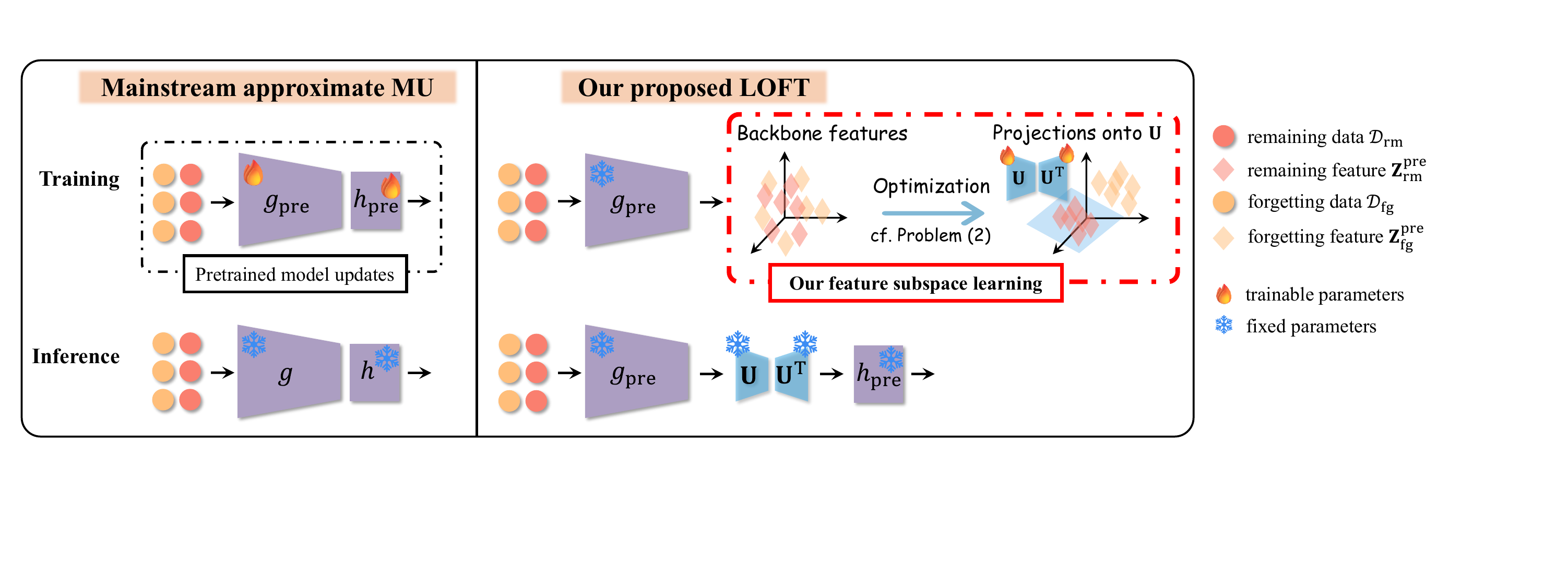}
    \caption{Overview of our proposed \ours and mainstream approximate MU methods.
    After unlearning, the trained projection matrix of \ours is incorporated into the model's forward propagation during inference, with details in Sec.\ref{sec:sun:implementation}.}
    \label{fig:intro:pipeline}
\end{figure*}

Existing approximate MU methods generally update the parameters of the original model pretrained on the entire training data $\mathcal{D}$, so as to align its outputs with that of the ideal model retrained in the exact MU.
The core idea here is to proceed the parameter updates with an optimization objective that jointly penalizes the  performance on the forgetting data $\fgdata$ and maintains that on the remaining data $\rmdata$, with $\fgdata \bigcup \rmdata = \mathcal{D}$ and $\fgdata \bigcap \rmdata = \emptyset$.
This optimization objective leads to different ways to modify the pretrained model, including fine-tuning  only on $\rmdata$ \cite{warnecke2021machine}, assigning random labels to $\fgdata$ \cite{golatkar2020eternal}, conducting gradient ascent on $\fgdata$ \cite{thudi2022unrolling}, masking those parameters sensitive to $\fgdata$ \cite{fansalun}, distilling an additional network \cite{chundawat2023can,zhou2025decoupled}, etc. 
Detailed explanations and discussions on these MU methods are provided in Appendix \ref{app:sec:related-work}.
While enabling good performance, we note their two critical drawbacks far underexplored:
\\
{$\bullet$} {\bf 
Frequent revisits to the massive training data.}
Existing MU methods generally request to visit training samples in both $\rmdata$ and $\fgdata$ during the iterative parameter updates to the pretrained model.
Such massive data fetching not only incurs expensive computational overhead, but also poses privacy threats during the frequent access to (user) data.
\\
{$\bullet$} {\bf Iterative modifications to the pretrained model.}
Whenever a particular dataset $\fgdata$ is requested to be forgotten, parameters of the pretrained model have to be updated, yielding a new model tailored to $\fgdata$.
This can be
computationally 
infeasible
in practice, as each unlearning request demands
a new model 
modified 
from the pretrained one.

In this work, we aim to address the above challenges in approximate MU methods.
{\it (i)} Firstly, a new perspective to study MU in {\it low-dimensional feature subspaces} is introduced.
We reveal the potential separability in feature subspaces between $\fgdata$ and $\rmdata$ on the retrained model when comparing to the pretrained model.
{\it (ii)} Then, the observed separability inspires us to {\it learn a feature subspace} from the pretrained model to achieve unlearning.
To this end, the feature subspace is optimized to differentiate $\fgdata$ and $\rmdata$ by capturing maximal information of the features w.r.t. $\rmdata$ and meanwhile describing minimal information w.r.t. $\fgdata$.
This optimization is implemented following the spirits of Principal Component Analysis (PCA) \cite{pearson1901liii}: A projection matrix is trained and inserted into the pretrained model, requiring only two feature covariance matrices of $\fgdata$ and $\rmdata$.
Consequently, the knowledge of $\fgdata$ can be therein effectively diminished, while that of $\rmdata$ gets well preserved in this feature subspace, thus fulfilling the objective of unlearning.
Our unlearning method is thereby named \ours for learning LOw-dimensional FeaTures.

Fig.\ref{fig:intro:pipeline} provides an overview of our \ours and the related mainstream approximate MU methods.
\ours offers efficiency for computation and storage, flexibility in practice, and privacy protection for user data, towards closing the gaps in adapting to real-world MU scenarios:
\\
{$\bullet$} {\bf One-shot feature fetching}. 
\ours conducts MU in the feature space, and thus only needs to fetch the features outputted from the pretrained model, instead of directly and iteratively accessing training samples, e.g., private user data. 
Note that once the features are fetched, calculations to the covariance matrix are one-shot, since our PCA-based optimization  does not vary the covariance matrix, but only updates projection directions for learning the feature subspace that well distinguishes the knowledge between $\rmdata$ and $\fgdata$.  
Hence, \ours successfully avoids direct and massive visits to the  training data (user data) for privacy protection and computational efficiency.
\\
{$\bullet$} {\bf Plug-in module implementation.}
With the spirits from PCA, we design a novel objective to optimize a projection matrix applied to the feature covariances, such that in the projected subspace the features w.r.t. $\rmdata$ can be well reconstructed while the features w.r.t. $\fgdata$ cannot. 
This algorithm enables the implementation in a plug-and-play manner: a projection matrix is inserted into the pretrained model, without retraining or fine-tuning the pretrained parameters.
This merit is significant for efficiency and real-world practicality. 
For instance, when handling multiple unlearning requests from different users, \ours only requires a separate projection module in a rather small size for each request, instead of a new model modified from the pretrained DNN.

The key contributions of this work are summarized below:\\
{$\bullet$}\ We presents a novel perspective, {\it feature subspace}, for MU, which to the best of our knowledge is the first time revealing  the potential separability between the forgetting and remaining data in low-dimensional feature subspaces.\\
{$\bullet$}\ A new MU method named \ours is proposed by leveraging PCA-based subspace learning, simply doing {\it one-shot feature fetching} and {\it updating a projection matrix} as a plugged-in module into the pretrained model. 
\ours well addresses the aforementioned crucial issues of massive data access and pretrained model modification in MU.\\
{$\bullet$}\ The objective in \ours involves two covariance matrices with size depending on the feature dimension and only optimizes a projection matrix, {\it reducing the parameter number and computing time by orders of magnitude} than mainstream MU methods, as demonstrated in our experiments.\\
{$\bullet$}\ Extensive experiments verify our superior unlearning accuracy and efficiency on varied datasets, networks, unlearning tasks, and real-world applications.

\begin{figure*}[t]
    \centering
    \includegraphics[width=0.85\linewidth]{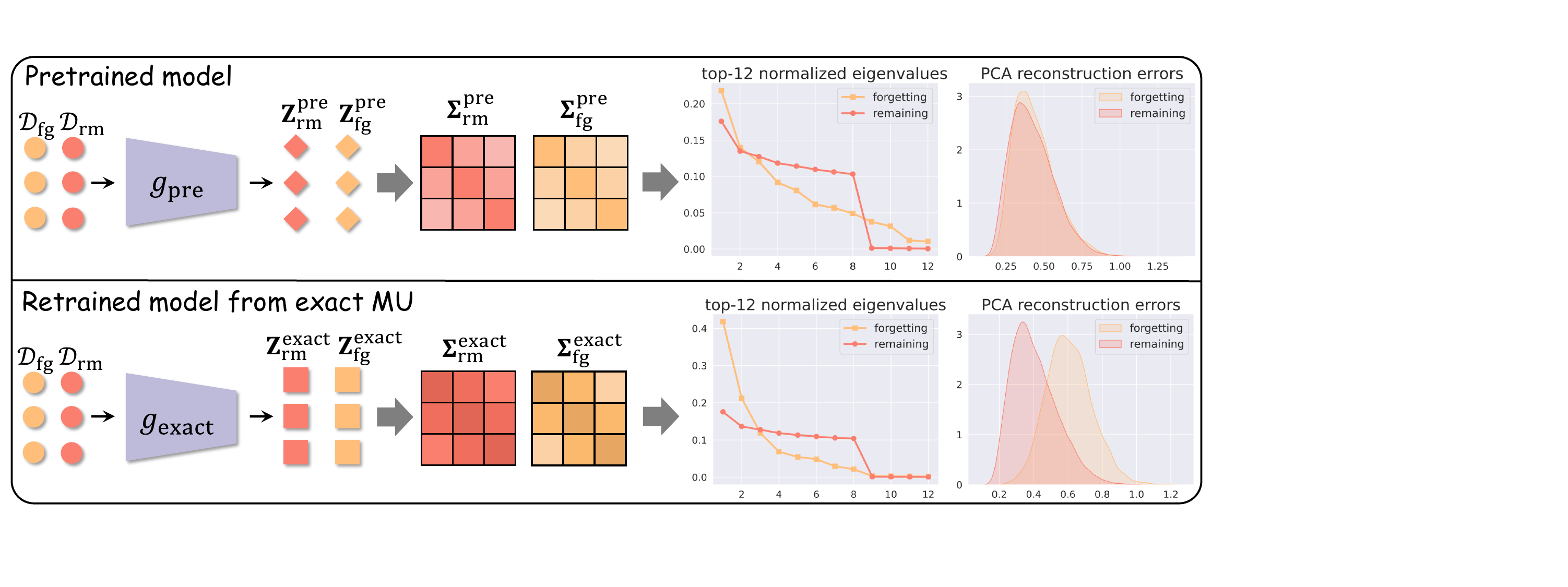}
    \caption{Spectrum and reconstruction analyses on features of $\fgdata$ and $\rmdata$ w.r.t. the pretrained $\pref$ and the retrained $\exactf$, with results on the top-12 normalized eigenvalues and reconstruction errors. 
    }
    \label{fig:spectrum}
\end{figure*}

\section{Preliminaries}
\label{sec:preliminary}
\paragraph{Machine unlearning.}
MU aims at maintaining prediction performances of a well-trained model $\pref$ when removing (or forgetting) specific data  \cite{bourtoule2021machine}. 
In MU, the model $\pref$ pretrained on the full dataset ${\cal D}$ is given.
A particular subset $\fgdata$ is requested to be forgotten, and we have the remaining data denoted as $\rmdata:={\cal D} \backslash\fgdata$.
MU seeks to erase the knowledge of $\fgdata$ from $\pref$ and meanwhile to preserve the performance on $\rmdata$.
The golden standard in MU is to attain a model $\exactf$ retrained from scratch  on $\rmdata$, i.e., the exact MU \cite{thudi2022necessity}.
In contrast, fine-tuning the pretrained model $\pref$ to approximate the ideal performances of $\exactf$ has been a central research focus for flexibility and efficiency, namely the approximate MU \cite{izzo2021approximate}.
We provide more discussions on  related work in Appendix~\ref{app:sec:related-work}.

\paragraph{Principal component analysis.}
PCA has long been a fundamental tool in machine learning and beyond \cite{pearson1901liii}.
PCA seeks orthogonal directions, i.e., principal components capturing the highest projection variance, so as to keep maximal data information. 
These directions can be obtained by taking the eigenvectors of the input covariance matrix. 
PCA can explore the intrinsic patterns residing in low-dimension subspaces spanned by those principal components, and has been studied for learning with modern DNNs, e.g., training algorithms \cite{dldr}  and anomaly detection applications \cite{fang2024kpcaood}. 
In this work, we follow the spirits of PCA for MU by {\it learning} the projection directions associated with the low-dimensional feature subspace from $\pref$. 

\paragraph{Notations.}
Let $\pref: \mathcal{X}\to\mathbb R^C$ be a DNN pretrained on  ${\cal D}=\{\boldsymbol{x}_i,y_i\}_{i=1}^{N}\subseteq\mathcal{X}\times\mathcal{Y}$, with samples $\boldsymbol{x}_i \in\mathcal{X}\subset\mathbb{R}^D$ and their labels $y_i\in\mathcal{Y}=\{1,2,\cdots,C\}$. 
The model $f(\cdot)$ takes $\boldsymbol{x}$ as the input and outputs $C$-dimensional logits $f(\boldsymbol{x})\in\mathbb{R}^C$ for classification prediction.
More specifically, $f$ can be structured with a backbone $g:\mathcal{X}\rightarrow\mathbb{R}^d$ and a linear layer $h:\mathbb{R}^d\rightarrow\mathbb{R}^C$ in sequence, where
$g(\cdot)$ learns the (penultimate-layer)  feature $\boldsymbol{z}=g(\boldsymbol{x})\in\mathbb{R}^d$ and then the linear layer $h(\cdot)$ is applied to the feature $\boldsymbol{z}$ for the output logits, i.e., $f(\boldsymbol{x})=h(g(\boldsymbol{x}))$.

\section{Feature Subspace Perspective for Approximate MU}
\label{sec:low-feat-subspace}
In this section, we introduce the perspective of feature subspaces to decouple the impacts of the forgetting data $\fgdata$ from the remaining data $\rmdata$ in approximate MU. 
This feature-based perspective helps explore the intrinsic differences between the pretrained $\pref$ and the retrained $\exactf$ in low-dimensional subspaces of their learned features, advocating efficient MU with private data protection.

The pretrained $\pref$ is learned with the entire training dataset $ \mathcal{D} = \fgdata \bigcup \rmdata$.
In contrast, the ideal model $\exactf$ from the exact MU is trained exclusively on $\rmdata$ and never sees $\fgdata$. 
This fundamental distinction in their training paradigms motivates our exploration on the learned features $\boldsymbol{z}$ w.r.t. $\fgdata$ and $\rmdata$ from $\pref$ and $\exactf$, respectively. 
To be specific, we consider the potential separability between $\fgdata$ and $\rmdata$ through  feature-based perspectives,  naturally assuming that the unseen forgetting data $\fgdata$ are easy to be differentiated from the seen $\rmdata$ in the feature space learned in the exact MU model $\exactf$. 
We in accordance make the following hypothesis:\\
{\it\hypo For the exact MU model $\exactf$, there exists a low-dimensional feature subspace, where the features of $\fgdata$ and $\rmdata$ are easy to be separated, while for the pretrained model $\pref$, the features of $\fgdata$ and $\rmdata$ remain distinctively less separable.}

In the following, both empirical evidences and theoretical analysis are provided to support $\hypo$.
Particularly, PCA is applied to features $\boldsymbol{z}\in\mathbb R^d$ from the network backbone, in order to explore the intrinsic structures.
We denote the features of the forgetting data $\fgdata$ and the remaining data $\rmdata$ from $\preg$ and $\exactg$ as $\prefeatfg,\prefeatrm,\exactfeatfg,\exactfeatrm$ and their covariance matrices as $\presigmafg,\presigmarm,\exactsigmafg,\exactsigmarm\in\mathbb{R}^{d\times d}$, respectively.
With eigen-decomposition to those covariance matrices, we investigate their eigenvalues and reconstruction performances, which is illustrated in Fig.\ref{fig:spectrum}.

\paragraph{Spectrum analysis.}
Fig.\ref{fig:spectrum} visualizes the eigen-decay (i.e., the distribution of variance across principal components) for the aforementioned covariance matrices. 
In the pretrained $\pref$, the decays in eigenvalues of $\presigmafg$ and $\presigmarm$ exhibit similar trends. 
In contrast, a marked divergence appears in the retrained $\exactf$: $\exactsigmafg$ shows a distinctly sharper eigenvalue decay than $\exactsigmarm$. 
This sharper decay indicates that, after exact unlearning, features of $\fgdata$ becomes concentrated in fewer dominant directions, while $\rmdata$ retains a more dispersed structure. 
This spectrum divergence reveals the potential of achieving unlearning within the feature subspace, rather than typically approaching the outputs or parameters of $\exactf$ in existing approximate MU methods.

\paragraph{Reconstruction analysis.}
The PCA reconstruction error quantitatively measures the captured information along projection directions.
We apply PCA to the remaining features $\featrm$ and obtain $\projrm\in \mathbb R^{d\times s}$ projecting features onto the $s$-dimensional subspace.
The reconstruction error for a centered feature $\boldsymbol{\bar z}$ is calculated as $e=\|\projrm\projrm^\top{\boldsymbol{\bar z}}-\boldsymbol{\bar z}\|_2$. 
In Fig.\ref{fig:spectrum}, we compute such errors on both $\featrm$ and $\featfg$ of $\pref$ and $\exactf$, respectively.
The reconstruction errors for $\prefeatfg$ and $\prefeatrm$ of the pretrained $\pref$ are very similar, while those for $\exactfeatfg$ and $\exactfeatrm$ of the retrained $\exactf$ appear distinctively different. 
The results suggest that the subspace learned from $\exactfeatrm$ fails to capture the intrinsic patterns of $\exactfeatfg$, making them easily separable therein and thus supporting the aforementioned hypothesis $\hypo$. 

\paragraph{Analytical evidence.}
We provide theoretical evidence on reconstruction analyses for the hypothesis $\hypo$, supporting the promising perspective of learning a low-dimensional feature subspace that can separate $\fgdata$ and $\rmdata$ under the exact MU model $f_{\rm exact}$.
This result is stated in the following Lemma \ref{lm:sep-subspace} with proof in Appendix \ref{app:sec:theory-proof}.
\begin{lemma}
\label{lm:sep-subspace}
Given the feature covariance $\exactsigmarm$ from $\exactf$ on $\rmdata$, its eigendecomposition is given by $\exactsigmarm={\bf V}{\bf\Lambda}{\bf V}^\top$ with eigenvalues $\lambda_1\geq\lambda_2\geq\cdots\lambda_n$.
There exists an $s$-dimensional subspace ${\rm range}({\bf U}_*)$ spanned by the top-$s$ principal directions of $\exactsigmarm$ (${\bf U}_*={\bf V}_{:,1:s}$), such that for any $\boldsymbol{x}\in\rmdata$, its features can be well reconstructed by the subspace ${\rm range}({\bf U}_*)$, while for any $\boldsymbol{x}\in\fgdata$, its features hold minor projected information in ${\rm range}({\bf U}_*)$.
Mathematically, we have small positive numbers $\epsilon_{\rm rm},\epsilon_{\rm fg}>0$ that satisfies
\begin{equation}
\begin{aligned}
\|({\bf I}-{\bf U}_*{\bf U}_*^\top)\exactg(\boldsymbol{x})\|_2
&\leq\epsilon_{\rm rm},
\quad\forall\boldsymbol{x}\in\rmdata,\\
\|{\bf U}_*{\bf U}_*^\top\exactg(\boldsymbol{x})\|_2
&\leq\epsilon_{\rm fg},
\quad\forall\boldsymbol{x}\in\fgdata.
\end{aligned}
\end{equation}
\end{lemma}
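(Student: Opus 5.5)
The lemma is mostly an existence statement with unspecified constants, so the plan is to write $\epsilon_{\rm rm}$ and $\epsilon_{\rm fg}$ explicitly in terms of the spectrum of $\exactsigmarm$ and the geometry of the forgetting features. Then I will argue these constants are small under the spectral behaviour of $\exactf$ reported in Fig.~\ref{fig:spectrum}. As in the reconstruction analysis, I work with centered features $\bar{\boldsymbol z}=\exactg(\boldsymbol x)-\boldsymbol\mu_{\rm rm}$, where $\boldsymbol\mu_{\rm rm}$ is the mean over $\rmdata$. I write ${\bf P}={\bf U}_*{\bf U}_*^\top$ for the orthogonal projector onto the top-$s$ eigenvectors and ${\bf P}^\perp={\bf I}-{\bf P}={\bf V}_{:,s+1:d}{\bf V}_{:,s+1:d}^\top$ for its complement.

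\textbf{Step 1 (remaining data).} Because ${\bf V}$ is orthogonal, ${\rm tr}({\bf P}^\perp\exactsigmarm)=\sum_{j>s}\lambda_j$, which is the average of $\|{\bf P}^\perp\bar{\boldsymbol z}\|_2^2$ over $\rmdata$. This controls the reconstruction error on average. To obtain the ``for all $\boldsymbol x\in\rmdata$'' bound, I note that $\rmdata$ is finite with $N_{\rm rm}$ samples. Each term is at most the sum, so $\|{\bf P}^\perp\bar{\boldsymbol z}_i\|_2^2\le N_{\rm rm}\sum_{j>s}\lambda_j$. A cleaner alternative is to take $\epsilon_{\rm rm}=\max_{\boldsymbol x\in\rmdata}\|{\bf P}^\perp\bar{\boldsymbol z}\|_2$, which is finite, and use the trace identity to argue that it is governed by the tail $\sum_{j>s}\lambda_j$. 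Under the dispersed-but-decaying spectrum of $\exactsigmarm$, choosing $s$ so that the tail mass is tiny makes $\epsilon_{\rm rm}$ small. Existence is trivial for any finite set; the substance is the tail-eigenvalue expression.

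\textbf{Step 2 (forgetting data).} Here I bound $\|{\bf P}\bar{\boldsymbol z}\|_2$ for $\boldsymbol x\in\fgdata$. Again, by finiteness I can set $\epsilon_{\rm fg}=\max_{\boldsymbol x\in\fgdata}\|{\bf U}_*^\top\bar{\boldsymbol z}\|_2$. To make this quantity meaningful, I would decompose the forgetting features as $\bar{\boldsymbol z}={\bf P}\bar{\boldsymbol z}+{\bf P}^\perp\bar{\boldsymbol z}$. I would then invoke the empirical observation behind \hypo: $\exactsigmafg$ has a sharply decaying spectrum concentrated in a few directions that $\exactf$ never aligned with the remaining-data principal directions, since it never saw $\fgdata$. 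Formally, let ${\bf W}$ be the dominant eigenvectors of $\exactsigmafg$. Then
\[
{\rm tr}({\bf P}\,\exactsigmafg)\le \|{\bf U}_*^\top{\bf W}\|_F^2\,\lambda_{\max}(\exactsigmafg)+\textstyle\sum_{\rm tail}\lambda_j(\exactsigmafg),
\]
which is small when the principal angles between ${\rm range}({\bf U}_*)$ and ${\rm range}({\bf W})$ are near $90^\circ$. I then pass from this average bound to a per-sample bound exactly as in Step 1.

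\textbf{Main obstacle.} The difficulty lies in Step 2. Nothing in the setup forces forgetting features to be nearly orthogonal to the top remaining directions. For random or class-mixed $\fgdata$ this can genuinely fail, so ``small'' cannot be derived from first principles. The honest form of the lemma is therefore a quantitative existence claim: the bounds hold with the explicit $\epsilon_{\rm rm}$ and $\epsilon_{\rm fg}$ above. Their smallness then rests on an alignment assumption, namely near-orthogonality of $\exactsigmafg$'s dominant subspace to ${\rm range}({\bf U}_*)$, supported by the spectrum and reconstruction evidence in Fig.~\ref{fig:spectrum}. I would state this assumption explicitly in the proof rather than hide it.
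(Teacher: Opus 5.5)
\textbf{Step 1.} This is the paper's argument. The paper assumes the top-$s$ eigenvalues of $\exactsigmarm$ carry at least a $1-\xi$ fraction of the trace. It identifies the expected reconstruction error of the centered remaining features with the tail $\sum_{i>s}\lambda_i\le\xi\,{\rm Tr}(\exactsigmarm)$, and then simply asserts $\epsilon_{\rm rm}\propto\sqrt{\xi}$. Your explicit passage from the average to a per-sample bound (via the maximum, or the factor $N_{\rm rm}$) is more careful than the paper's. It also makes visible that the tail mass controls the reconstruction error only on average. One caveat applies to both of you: you center by $\boldsymbol\mu_{\rm rm}$ although the lemma is stated for the uncentered $\exactg(\boldsymbol x)$, so a term $\|({\bf I}-{\bf P})\boldsymbol\mu_{\rm rm}\|_2$ is silently dropped.

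\textbf{Step 2: the gap.} You leave out the mean of the forgetting features. Write ${\bf P}={\bf U}_*{\bf U}_*^\top$. The quantity ${\rm tr}({\bf P}\exactsigmafg)$ is the average over $\fgdata$ of $\|{\bf P}(\exactg(\boldsymbol x)-\boldsymbol\mu_{\rm fg})\|_2^2$, so it only measures the spread of the forgetting features around their \emph{own} mean. With your centering $\bar{\boldsymbol z}=\exactg(\boldsymbol x)-\boldsymbol\mu_{\rm rm}$, the average of $\|{\bf P}\bar{\boldsymbol z}\|_2^2$ over $\fgdata$ is ${\rm tr}({\bf P}\exactsigmafg)+\|{\bf P}(\boldsymbol\mu_{\rm fg}-\boldsymbol\mu_{\rm rm})\|_2^2$. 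The lemma's quantity $\|{\bf P}\exactg(\boldsymbol x)\|_2$ involves ${\bf P}\boldsymbol\mu_{\rm fg}$ directly. So ``passing to a per-sample bound exactly as in Step 1'' does not bound what the lemma asks for.

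This omission matters. In class-wise forgetting the class-mean offset is typically the dominant part of the forgetting features, and it is exactly the part that can align with the top remaining directions. Your principal-angle condition on the dominant eigenvectors of $\exactsigmafg$ therefore does not by itself make $\epsilon_{\rm fg}$ small.

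\textbf{How the paper handles it.} The paper's proof is built around this term. It writes $\exactg(\boldsymbol x)=\boldsymbol\mu_{\rm fg}+(\exactg(\boldsymbol x)-\boldsymbol\mu_{\rm fg})$ and applies the triangle inequality. The mean part is controlled by an alignment coefficient, $\|{\bf P}\boldsymbol\mu_{\rm fg}\|_2\le\alpha\|\boldsymbol\mu_{\rm fg}\|_2\le\alpha_{\rm max}M_{\boldsymbol\mu}$. The fluctuation part is controlled through ${\rm Tr}({\bf U}_*^\top\exactsigmafg{\bf U}_*)$ with $\|{\exactsigmafg}^{1/2}{\bf U}_*\|_2^2\le\sigma_{\rm max}^2$. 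This gives $\epsilon_{\rm fg}=\alpha_{\rm max}M_{\boldsymbol\mu}+\sqrt{s}\,\sigma_{\rm max}+{\cal O}(1)$, with the same loose expectation-to-sample step.

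\textbf{Repair.} Add a mean-alignment assumption on $\boldsymbol\mu_{\rm fg}$ alongside your principal-angle assumption. Your inequality ${\rm tr}({\bf P}\exactsigmafg)\le\|{\bf U}_*^\top{\bf W}\|_F^2\,\lambda_{\max}(\exactsigmafg)+\sum_{\rm tail}\lambda_j(\exactsigmafg)$ is correct. For the fluctuation term it is a more informative substitute for the paper's bare $\sigma_{\rm max}$, because it names the geometric reason the term is small. Your closing discussion matches the paper's remark that these constants degrade when $\fgdata$ and $\rmdata$ are similar.
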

This insight unveils the distinction between features $\featrm$ and $\featfg$ in low-dimensional subspaces w.r.t. $\exactf$ and can advocate novel unlearning methodologies from the raised new perspective, such as our \ours introduced in Sec.\ref{sec:sun}.

\section{Approximate MU with Subspace Learning}
\label{sec:sun}

Grounded on the feature subspace perspective and the analysis in Sec.\ref{sec:low-feat-subspace}, we now focus on how to attain a preferable feature subspace for $\pref$, such that the features $\prefeatrm$ and $\prefeatfg$ can be well distinguished in this subspace, i.e., diminishing the knowledge of $\fgdata$ and preserving that of $\rmdata$ from $\pref$. 
Accordingly, we name this proposed method as \ours, learning LOw-dimensional FeaTure subspaces. 
Apart from approaching the outputs or parameters of the exact retrained $\exactf$, \ours leads to an interesting and promising direction of formulating unlearning models that pertain similar properties in feature (sub)spaces, which we hope could bring new perspectives and opportunities in MU.


\subsection{Optimization}
\ours aims to find an $s$-dimensional subspace ${\tt range}(\bf U)$ for the $d$-dimensional features of the pretrained model $\pref$.
This linear subspace is in fact given by the span of the orthonormal columns of the $d\times s$ matrix ${\bf U}=[\boldsymbol{u}_1, \ldots, \boldsymbol{u}_s]$, the set of which defines the well-known Stiefel manifold ${\rm St}(d,s)$ with $d \geq s$. Towards the feature separability for unlearning, \ours follows the spirits from PCA to seek such a  subspace (to optimize $\bf U$),  in which the information of $\prefeatrm$ is maximally captured and meanwhile that of the forgetting features $\prefeatfg$ gets minimally described.

\begin{proposition}
\label{prop:min:evd}
Let $\bf M$ be an $l \times l$ symmetric matrix. 
Let $\gamma_1, \ldots, \gamma_m$  be its $m$ smallest eigenvalues, possibly including multiplicities, with associated orthonormal eigenvectors $\boldsymbol{v}_1, \ldots, \boldsymbol{v}_m$. 
Let $\bf V$ be a matrix whose columns are these eigenvectors. 
Then, the optimization problem $\min_{{\bf U}\in {\rm St}(l, m)} {\rm Tr}({\bf U}^\top {\bf M} {\bf U})$ has a minimizer at ${\bf U}_\star = {\bf V}$ and we have ${\bf U}_\star^\top {\bf M} {\bf U}_\star = {\rm diag}({\bf\Gamma})$ with ${\bf\Gamma} = [\gamma_1, \ldots, \gamma_m]^\top$, where $\rm Tr(\cdot)$ denotes the matrix trace.
\end{proposition}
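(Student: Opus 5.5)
The plan is to reduce the trace minimization to a linear program over the diagonal weights of the projector ${\bf U}{\bf U}^\top$ in the eigenbasis of $\bf M$. Write the full eigendecomposition ${\bf M}={\bf Q}{\rm diag}(\mu_1,\ldots,\mu_l){\bf Q}^\top$ with ${\bf Q}$ orthogonal and $\mu_1\le\mu_2\le\cdots\le\mu_l$, arranged so that the first $m$ columns of $\bf Q$ are $\boldsymbol{v}_1,\ldots,\boldsymbol{v}_m$ and $\mu_i=\gamma_i$ for $i\le m$. For any ${\bf U}\in{\rm St}(l,m)$ set ${\bf W}={\bf Q}^\top{\bf U}$. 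Then ${\bf W}\in{\rm St}(l,m)$, and
\begin{equation*}
{\rm Tr}({\bf U}^\top{\bf M}{\bf U})={\rm Tr}({\bf W}^\top{\rm diag}(\mu){\bf W})=\sum_{i=1}^{l}\mu_i p_i,\qquad p_i=\|{\bf W}_{i,:}\|_2^2 .
\end{equation*}

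The key step is to identify the constraints on the weights $p_i$. Two facts give them. First, $\sum_i p_i=\|{\bf W}\|_F^2=m$. Second, each $p_i=({\bf W}{\bf W}^\top)_{ii}$ is a diagonal entry of an orthogonal projector, so $0\le p_i\le 1$. The objective is therefore bounded below by
\begin{equation*}
\min\Big\{\textstyle\sum_i\mu_i p_i : 0\le p_i\le 1,\ \sum_i p_i=m\Big\}.
\end{equation*}
An exchange argument shows this minimum equals $\sum_{i=1}^m\mu_i$. Write $\sum_i\mu_i p_i-\sum_{i\le m}\mu_i=\sum_{i\le m}\mu_i(p_i-1)+\sum_{i>m}\mu_i p_i$. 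Then bound $\mu_i\le\mu_m$ in the first sum, where $p_i-1\le 0$, and $\mu_i\ge\mu_m$ in the second, where $p_i\ge 0$. This gives at least $\mu_m\big(\sum_i p_i-m\big)=0$. Ties among eigenvalues and multiplicities cause no trouble, since only the ordering $\mu_i\le\mu_m\le\mu_j$ for $i\le m<j$ is used.

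For attainment, take ${\bf U}_\star={\bf V}=[\boldsymbol{v}_1,\ldots,\boldsymbol{v}_m]$. It has orthonormal columns, so it lies in ${\rm St}(l,m)$. Since ${\bf M}\boldsymbol{v}_j=\gamma_j\boldsymbol{v}_j$, we get $({\bf V}^\top{\bf M}{\bf V})_{ij}=\gamma_j\boldsymbol{v}_i^\top\boldsymbol{v}_j=\gamma_j\delta_{ij}$, which is ${\rm diag}({\bf\Gamma})$. Its trace is $\sum_{j\le m}\gamma_j$, matching the lower bound, so ${\bf V}$ is a minimizer.

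The only mildly delicate step is justifying $p_i\le 1$. I would handle it by noting that ${\bf P}={\bf W}{\bf W}^\top$ satisfies ${\bf P}^2={\bf P}={\bf P}^\top$. Hence $p_i=\boldsymbol{e}_i^\top{\bf P}\boldsymbol{e}_i=\|{\bf P}\boldsymbol{e}_i\|_2^2\le\|\boldsymbol{e}_i\|_2^2=1$. Alternatively, one could cite the Ky Fan trace minimum principle, or the Courant--Fischer / Poincaré interlacing theorem, directly.
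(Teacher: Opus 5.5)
Your proof is correct. The paper gives no proof of this proposition to compare against. It imports the result by citation; it is the Ky Fan trace-minimum principle. The appendix proves only Lemma~\ref{lm:sep-subspace} and Theorem~\ref{thm:sun-gap}, so you are supplying the argument the paper outsources.

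Your route is the standard elementary proof, and each step checks out:
\begin{itemize}
\item rotating into the eigenbasis reduces the objective to $\sum_i \mu_i p_i$, with $p$ the diagonal of the projector $\mathbf{W}\mathbf{W}^\top$;
\item $\mathbf{W}^\top\mathbf{W}=\mathbf{I}_m$, $\sum_i p_i=m$, and $0\le p_i=\|\mathbf{P}\boldsymbol{e}_i\|_2^2\le 1$;
\item the sign bookkeeping in the exchange bound over $\{0\le p_i\le 1,\ \sum_i p_i=m\}$ is right.
\end{itemize}

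Compared with simply citing Ky Fan or Courant--Fischer, your argument is self-contained and makes the multiplicity and tie cases transparent. It also gives slightly more than the statement asks. Tracing the equality case when the largest of the $\gamma_i$ is strictly below the remaining eigenvalues forces $p_i=1$ for the first $m$ indices and $p_i=0$ otherwise. Hence ${\rm range}(\mathbf{U})={\rm range}(\mathbf{V})$ for every minimizer, i.e., the minimizer is unique up to $\mathbf{U}\mapsto\mathbf{U}\mathbf{R}$ with $\mathbf{R}$ orthogonal. That is the form in which the paper actually uses the result, since only $\mathbf{U}\mathbf{U}^\top$ enters its objective and its theorem.

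Two small details deserve one line each:
\begin{itemize}
\item Extending $\boldsymbol{v}_1,\dots,\boldsymbol{v}_m$ to a full orthonormal eigenbasis uses that ${\rm span}(\boldsymbol{v}_1,\dots,\boldsymbol{v}_m)^\perp$ is $\mathbf{M}$-invariant by symmetry. Its eigenvalues are the remaining ones, all at least $\max_i\gamma_i$.
\item The statement does not order the $\gamma_i$. In the exchange step, either sort them first or replace $\mu_m$ by $\max_{i\le m}\mu_i$. Your computation $\mathbf{V}^\top\mathbf{M}\mathbf{V}=\mathrm{diag}(\boldsymbol{\Gamma})$ is order-independent, so nothing else changes.
\end{itemize}
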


Proposition \ref{prop:min:evd} \cite{arun} formalizes the optimization problem for obtaining the projection directions capturing minimal  information in PCA with Stiefel manifold, where $\bf M$ denotes the covariance matrix. 
This correspondingly leads to the optimization problem for  the projection directions capturing maximal information: the subspace spanned by the eigenvectors of $\bf M$ with the $m$ largest eigenvalues is obtained by solving 
$  \min_{{\bf U}\in {\rm St}(l, m)} {\rm Tr}({\bf M} - {\bf U}{\bf U}^\top {\bf M} {\bf U} {\bf U}^\top)$,
which also corresponds to the reconstruction error of PCA as similarly explained in Sec 4.1 of \cite{avron2014subspace}. In \ours, we establish the following objective to learn the subspace, specifically for the purpose of MU:
\begin{equation}
\begin{aligned}
\label{eq:opt-obj}
\min_{{\bf U}\in{\rm St}(d,s)}
&{J}({\bf U})
=
\underbrace{
\Bigl(
\frac
{{\rm Tr}({\bf U}^\top\presigmafg{\bf U})}
{{\rm Tr}(\presigmafg)}
\Bigr)^2}_{\objfg}\\
&+
\underbrace{
\Bigl(
\frac
{{\rm Tr}\left(\presigmarm-{\bf U}{\bf U}^\top\presigmarm{\bf U}{\bf U}^\top\right)}
{{\rm Tr}\left(\presigmarm\right)}
\Bigr)^2}_{\objrm},
\end{aligned}
\end{equation}
where the numerators play key roles and the denominators balance normalization.
The patterns of $\rmdata$ get preserved with $\objrm$, while that of $\fgdata$ gets diminished with $\objfg$. 
The joint optimization of $\objrm$ and $\objfg$ advocates the separability between $\prefeatrm$ and $\prefeatfg$ in the subspace projected via $\bf U$:\\
{$\bullet$}\ $\objfg$ quantifies the projection variance of the forgetting features $\prefeatfg$ onto the subspace spanned by $\bf U$, as explained in Proposition \ref{prop:min:evd}. Minimizing $\objfg$ encourages $\bf U$ to capture minimal information of  $\prefeatfg$, aiming to forget the knowledge of $\fgdata$. \\
{$\bullet$}\ $\objrm$ measures the projection variance of the remaining features $\prefeatrm$ captured within the complement subspace w.r.t. $\bf U$, namely the reconstruction error.
Minimizing $\objrm$ forces $\bf U$ to preserve maximal information of $\prefeatrm$, so as to maintain model performance on $\rmdata$.

\subsection{Implementation and Discussion}
\label{sec:sun:implementation}

\ours simply requires to optimize the projection matrix $\bf U$ applied to features $\prefeatfg$ and $\prefeatrm$ from the pretrained $\pref$. 
In implementation, a projection module associated with parameters $\bf U$ is inserted primarily between the backbone $\preg$ and the last linear layer $\preh$, so as to encode penultimate-layer features into a subspace ${\tt range}({\bf U})$ via an orthogonal projector ${\bf U}{\bf U}^\top$. 
During training, the Riemannian optimizer \cite{becigneulriemannian,kochurov2020geoopt} is deployed for $J(\bf U)$ with $\bf U$ optimized on the Stiefel manifold ${\rm St}(d,s)$.
During inference, the resulting model from \ours for unlearning is given by $f_{\rm\bf U}(\cdot)\triangleq \preh({\bf U}{\bf U}^\top \preg(\cdot))$, in contrast to the original pretrained model $\pref =  \preh(\preg(\cdot))$, as demonstrated in Fig.\ref{fig:intro:pipeline}.

\paragraph{One-shot feature fetching.}
The computation to ${J}({\bf U})$ relies on the two covariance matrices $\presigmarm$ and $\presigmafg$ w.r.t. features from $\pref$. 
Once $\presigmarm,\presigmafg$ are computed, \ours proceeds to optimize the $d\times s$ projection matrix $\bf U$ and thus only conducts one-shot feature fetching. 
This differs fundamentally from existing approximate MU methods, which must repeatedly access the original training data at each optimization iteration.
By eliminating the need for such direct and massive visits to data samples, \ours significantly reduces computational overhead and inherently strengthens privacy protection on (user) data in real-world MU.

\paragraph{Plug-in module implementation.} 
Compared to fine-tuning or retraining all parameters in $\pref$, \ours optimizes a $d\times s$ matrix as in (\ref{eq:opt-obj}), the parameter number of which is thereby reduced by orders of magnitude. 
Thus, \ours takes significantly less computation overhead than the exact MU and mainstream approximate MU methods, supported by numerical results in Sec.\ref{sec:exp:comparison}.
In implementation, \ours flexibly serves as a plug-in module into the pretrained model $\pref$, without modifying parameters of $\pref$.
This merit is of great practicality in real-world deployment: in cases of handling multiple unlearning requests from different users, each request can be addressed by efficiently optimizing a small-size projection matrix, instead of maintaining multiple models refined from $\pref$.

\begin{table*}[t]
\centering
\caption{Comparison results of {Swin-T} on {Tiny-ImageNet}. 
The $\checkmark$ and \ding{55} indicate whether $\rmdata$ and $\fgdata$ are involved in optimization.
Avg.G. is average accuracy and MIA gap to the retrained model.}
\resizebox{\textwidth}{!}{
\begin{tabular}{c|cc|cc cc|c| c}
\toprule
method & $\rmdata$ & $\fgdata$ & $\rm\bf Acc_{rm}^{tr}$ & $\rm\bf Acc_{fg}^{tr}$ & $\rm\bf Acc_{rm}^{te}$ & $\rm\bf Acc_{fg}^{te}$ & {\bf MIA} & Avg.G.$\downarrow$\\
\midrule
pretrained & $\usym{1F5F8}$ & $\usym{1F5F8}$ & $99.63_{\pm0.01}$ & $99.75_{\pm0.20}$ & $74.54_{\pm0.16}$ & $74.83_{\pm8.14}$ & $96.63_{\pm1.02}$ & - \\
retrained & $\usym{1F5F8}$ & $\usym{1F5F4}$ & $99.67_{\pm0.03}$ (\textcolor{blue}{0.00}) & $0.00_{\pm0.00}$ (\textcolor{blue}{0.00}) & $75.44_{\pm0.19}$ (\textcolor{blue}{0.00}) & $0.00_{\pm0.00}$ (\textcolor{blue}{0.00}) & $0.00_{\pm0.00}$ (\textcolor{blue}{0.00}) & \textcolor{blue}{0.00}\\
\midrule
GA & $\usym{1F5F4}$ & $\usym{1F5F8}$ & $88.40_{\pm5.51}$ (\textcolor{blue}{11.27}) & $23.12_{\pm12.14}$ (\textcolor{blue}{23.12}) & $65.27_{\pm3.56}$ (\textcolor{blue}{10.17}) & $18.82_{\pm7.52}$ (\textcolor{blue}{18.82}) & $19.95_{\pm10.24}$ (\textcolor{blue}{19.95}) & \textcolor{blue}{16.67}\\
\textcolor{gray}{FT} & \textcolor{gray}{$\usym{1F5F8}$} & \textcolor{gray}{$\usym{1F5F4}$} & \textcolor{gray}{$99.92_{\pm0.00}$ (\textcolor{grayblue}{0.25})} & \textcolor{gray}{$18.73_{\pm5.90}$ (\textcolor{grayblue}{18.73})} & \textcolor{gray}{$74.25_{\pm0.21}$ (\textcolor{grayblue}{1.19})} & \textcolor{gray}{$16.17_{\pm6.33}$ (\textcolor{grayblue}{16.17})} & \textcolor{gray}{$3.73_{\pm2.22}$ (\textcolor{grayblue}{3.73})} & \textcolor{grayblue}{8.02}\\
RL & $\usym{1F5F4}$ & $\usym{1F5F8}$ & $92.94_{\pm1.91}$ (\textcolor{blue}{6.73}) & $9.98_{\pm7.44}$ (\textcolor{blue}{9.98}) & $67.17_{\pm1.46}$ (\textcolor{blue}{8.17}) & $7.50_{\pm7.57}$ (\textcolor{blue}{7.50}) & $1.83_{\pm0.54}$ (\textcolor{blue}{1.83}) & \textcolor{blue}{6.85} \\
\textcolor{gray}{RL} & \textcolor{gray}{$\usym{1F5F8}$} & \textcolor{gray}{$\usym{1F5F8}$} & \textcolor{gray}{$99.91_{\pm0.01}$ (\textcolor{grayblue}{0.24})} & \textcolor{gray}{$1.52_{\pm1.17}$ (\textcolor{grayblue}{1.52})} & \textcolor{gray}{$74.43_{\pm0.18}$ (\textcolor{grayblue}{1.01})} & \textcolor{gray}{$0.33_{\pm0.58}$ (\textcolor{grayblue}{0.33})} & \textcolor{gray}{$0.00_{\pm0.00}$ (\textcolor{grayblue}{0.00})} & \textcolor{grayblue}{0.62}\\
SalUn & $\usym{1F5F4}$ & $\usym{1F5F8}$ & $93.16_{\pm1.90}$ (\textcolor{blue}{6.51}) & $13.55_{\pm10.30}$ (\textcolor{blue}{13.55}) & $67.49_{\pm1.56}$ (\textcolor{blue}{7.95}) & $10.00_{\pm9.26}$ (\textcolor{blue}{10.00}) & $3.45_{\pm0.87}$ (\textcolor{blue}{3.45}) & \textcolor{blue}{8.29} \\
\textcolor{gray}{SalUn} & \textcolor{gray}{$\usym{1F5F8}$} & \textcolor{gray}{$\usym{1F5F8}$} & \textcolor{gray}{$99.88_{\pm0.01}$ (\textcolor{grayblue}{0.21})} & \textcolor{gray}{$2.57_{\pm1.09}$ (\textcolor{grayblue}{2.57})} & \textcolor{gray}{$74.71_{\pm0.19}$ (\textcolor{grayblue}{0.73})} & \textcolor{gray}{$1.00_{\pm0.87}$ (\textcolor{grayblue}{1.00})} & \textcolor{gray}{$0.00_{\pm0.00}$ (\textcolor{grayblue}{0.00})} & \textcolor{grayblue}{0.90} \\
BT & $\usym{1F5F4}$ & $\usym{1F5F8}$ & $91.70_{\pm1.89}$ (\textcolor{blue}{7.97}) & $7.72_{\pm5.45}$ (\textcolor{blue}{7.72}) & $66.35_{\pm1.37}$ (\textcolor{blue}{9.09}) & $5.50_{\pm6.14}$ (\textcolor{blue}{5.50}) & $1.40_{\pm0.44}$ (\textcolor{blue}{1.40}) & \textcolor{blue}{6.34}\\
L2UL & $\usym{1F5F4}$ & $\usym{1F5F8}$ & $92.69_{\pm3.43}$ (\textcolor{blue}{6.98}) & $4.43_{\pm0.15}$ (\textcolor{blue}{4.43}) & $67.69_{\pm2.58}$ (\textcolor{blue}{7.75}) & $2.00_{\pm0.87}$ (\textcolor{blue}{2.00}) & $2.57_{\pm0.92}$ (\textcolor{blue}{2.57}) & \textcolor{blue}{4.75}\\
COUN+RL & $\usym{1F5F4}$ & $\usym{1F5F8}$ & $92.56_{\pm0.75}$ (\textcolor{blue}{7.11}) & $10.07_{\pm3.85}$ (\textcolor{blue}{10.07}) & $67.85_{\pm0.51}$ (\textcolor{blue}{7.59}) & $5.67_{\pm2.57}$ (\textcolor{blue}{5.67}) & $2.28_{\pm0.46}$ (\textcolor{blue}{2.28}) & \textcolor{blue}{6.54} \\ 
DELETE & $\usym{1F5F4}$ & $\usym{1F5F8}$ & $99.13_{\pm0.12}$ (\textcolor{blue}{\bf{0.54}}) & $13.32_{\pm4.17}$ (\textcolor{blue}{13.32}) & $73.31_{\pm0.28}$ (\textcolor{blue}{2.13}) & $6.50_{\pm2.18}$ (\textcolor{blue}{6.50}) & $1.85_{\pm0.48}$ (\textcolor{blue}{1.85}) & \textcolor{blue}{4.88} \\
\midrule
{\bf\ours} & only $\presigmarm$ & only $\presigmafg$ & $98.59_{\pm0.07}$ (\textcolor{blue}{1.08}) & $1.48_{\pm1.27}$ (\textcolor{blue}{\bf1.48}) & $74.27_{\pm0.18}$ (\textcolor{blue}{\bf1.17}) & $0.50_{\pm0.50}$ (\textcolor{blue}{\bf0.50}) & $0.00_{\pm0.00}$ (\textcolor{blue}{\bf0.00}) & \textcolor{blue}{\bf0.85}\\
\bottomrule
\end{tabular}}
\label{tab:exp-timgnet-swint-class}
\end{table*}

\paragraph{Theoretical guarantee.}
\ours offers a novel feature subspace perspective and an empirical method for MU.
We position \ours as orthogonal and complementary to the established research on certified unlearning \cite{pmlr-v119-guo20c}. 
Certified methods provide rigorous guarantees via differential privacy \cite{dwork2006differential} by bounding the parameter difference between approximate and exact unlearned models.
In contrast, \ours operates on a fundamentally different principle.
Instead of updating parameters of $\pref$, \ours applies a learnable projection matrix $\bf U$ on features from $\pref$ to achieve unlearning in a feature subspace sense.
This fundamental difference implies that existing theoretical framework designed for certifying parameter proximity are not directly applicable to \ours.
Nevertheless, we seek theoretical guarantees for \ours and shed light on the output difference between our $f_{\bf U}$ and the retrained $\exactf$, as formalized in Theorem \ref{thm:sun-gap} with proofs in Appendix \ref{app:sec:theory-proof}.
This analysis provides fundamental theoretical support for \ours, while a more rigorous theoretical development remains a valuable direction for future work.
\begin{theorem}
\label{thm:sun-gap}
Suppose that $J({\bf U})$ is optimized with a stationary point $\bf\hat U$ that approximates the optimal solution ${\bf U}_*$ with a small gap $\epsilon_{\rm opt}>0$, such that $\|{\bf U}_*{\bf U}_*^\top-{\bf\hat U}{\bf\hat U}^\top\|_F\leq\epsilon_{\rm opt}$, then, for any  $\boldsymbol{x}\in{\cal D}$, the difference of outputs between the unlearned model $f_{\bf\hat U}$ and the exact MU model $\exactf$ is bounded by a constant $C$:
$\|f_{\bf\hat U}(\boldsymbol{x})-\exactf(\boldsymbol{x})\|_2\leq C$ with $C$ dependent on $\epsilon_{\rm opt}$.
\end{theorem}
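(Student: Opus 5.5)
The plan is a triangle-inequality argument through an intermediate model built from the ideal projector. Write $f_{\bf\hat U}(\boldsymbol{x})=\preh({\bf\hat U}{\bf\hat U}^\top\preg(\boldsymbol{x}))$ and $\exactf(\boldsymbol{x})=\exacth(\exactg(\boldsymbol{x}))$. Since the heads are linear, write $\preh(\boldsymbol{z})={\bf W}_{\rm pre}\boldsymbol{z}+\boldsymbol{b}_{\rm pre}$ and $\exacth(\boldsymbol{z})={\bf W}_{\rm exact}\boldsymbol{z}+\boldsymbol{b}_{\rm exact}$. Insert the intermediate quantity $\preh({\bf U}_*{\bf U}_*^\top\preg(\boldsymbol{x}))$ and split the output gap into two terms:
\begin{equation*}
\|f_{\bf\hat U}(\boldsymbol{x})-\exactf(\boldsymbol{x})\|_2\le
\underbrace{\|{\bf W}_{\rm pre}({\bf\hat U}{\bf\hat U}^\top-{\bf U}_*{\bf U}_*^\top)\preg(\boldsymbol{x})\|_2}_{T_1}
+\underbrace{\|\preh({\bf U}_*{\bf U}_*^\top\preg(\boldsymbol{x}))-\exactf(\boldsymbol{x})\|_2}_{T_2}.
\end{equation*}

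The first term is the optimization error. Use $\|\cdot\|_2\le\|\cdot\|_F$ together with the hypothesis $\|{\bf U}_*{\bf U}_*^\top-{\bf\hat U}{\bf\hat U}^\top\|_F\le\epsilon_{\rm opt}$. This gives $T_1\le\|{\bf W}_{\rm pre}\|_2\,\epsilon_{\rm opt}\,B$, where $B=\max_{\boldsymbol{x}\in\mathcal{D}}\|\preg(\boldsymbol{x})\|_2$. The constant $B$ is finite because $\mathcal{D}$ is finite, or because features are bounded by assumption. This is the only place $\epsilon_{\rm opt}$ enters, and it gives the claimed dependence of $C$ on $\epsilon_{\rm opt}$.

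The second term, $T_2$, is the model-mismatch term and the main obstacle. The theorem only asks for some constant, so the crude route works: bound $T_2\le\|{\bf W}_{\rm pre}\|_2B+\|\boldsymbol{b}_{\rm pre}\|_2+\max_{\boldsymbol{x}}\|\exactf(\boldsymbol{x})\|_2$, using $\|{\bf U}_*{\bf U}_*^\top\|_2=1$. That suffices formally, but to make $C$ meaningful I would split $T_2$ by data type using Lemma \ref{lm:sep-subspace}. This requires interpreting ${\bf U}_*$ as the subspace playing the role of the top principal directions there, transported to the pretrained feature space.
\begin{itemize}
\item For $\boldsymbol{x}\in\rmdata$, the lemma-type reconstruction bound gives $\|({\bf I}-{\bf U}_*{\bf U}_*^\top)\boldsymbol{z}\|_2\le\epsilon_{\rm rm}$. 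Hence $\preh({\bf U}_*{\bf U}_*^\top\boldsymbol{z})$ is within $\|{\bf W}_{\rm pre}\|_2\epsilon_{\rm rm}$ of $\pref(\boldsymbol{x})$. What remains is $\|\pref(\boldsymbol{x})-\exactf(\boldsymbol{x})\|_2$ on remaining data, which I would keep as a constant $\delta_{\rm rm}$ expressing that the two models agree on $\rmdata$.
\item For $\boldsymbol{x}\in\fgdata$, the bound $\|{\bf U}_*{\bf U}_*^\top\boldsymbol{z}\|_2\le\epsilon_{\rm fg}$ gives an output within $\|{\bf W}_{\rm pre}\|_2\epsilon_{\rm fg}$ of $\boldsymbol{b}_{\rm pre}$. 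The gap to $\exactf(\boldsymbol{x})$ is then bounded via $\|\exactf(\boldsymbol{x})\|_2$, which by the same lemma applied to $\exactg$ is small up to bias terms.
\end{itemize}

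Collecting terms yields $C=\|{\bf W}_{\rm pre}\|_2B\,\epsilon_{\rm opt}+\|{\bf W}_{\rm pre}\|_2\max(\epsilon_{\rm rm},\epsilon_{\rm fg})+\delta$, with $\delta$ gathering the model-discrepancy and bias constants. This $C$ depends on $\epsilon_{\rm opt}$ and decreases as $\epsilon_{\rm opt}\to 0$. I expect the delicate point to be justifying the transfer of Lemma \ref{lm:sep-subspace} from $\exactg$ features to $\preg$ features. If that cannot be made rigorous, I would fall back on the crude bound for $T_2$, which still proves the statement as written.
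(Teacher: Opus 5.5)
Your proof is correct. The crude fallback already establishes the statement as literally written with no extra hypotheses: $\mathcal{D}$ is finite and the heads are affine, so $T_1\le\|{\bf W}_{\rm pre}\|_2\,\epsilon_{\rm opt}\,B$ and $T_2$ is a finite constant.

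The paper takes a different route. It never compares the two projectors on pretrained features. Instead it first swaps features under the learned projector, inserting $\preh({\bf\hat U}{\bf\hat U}^\top\exactg(\boldsymbol{x}))$ and paying $L_{\rm pre}\epsilon_d$. This uses Assumption \ref{asm:1-bound}, $\|\preg(\boldsymbol{x})-\exactg(\boldsymbol{x})\|_2\le\epsilon_d$, together with $\|{\bf\hat U}{\bf\hat U}^\top\|_2\le 1$. It then passes through $\preh(\exactg(\boldsymbol{x}))$ on $\rmdata$, or through $\preh(\boldsymbol{0})$ on $\fgdata$ (its heads carry no bias). Lemma \ref{lm:sep-subspace} is applied directly to exact-model features. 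The $\epsilon_{\rm opt}$ term is $\|{\bf U}_*{\bf U}_*^\top-{\bf\hat U}{\bf\hat U}^\top\|_F\|\exactg(\boldsymbol{x})\|_2\le\epsilon_{\rm opt}\epsilon_{\rm exact}$. The head mismatch is absorbed by $\|{\bf W}_{\rm pre}-{\bf W}_{\rm exact}\|_2\le\epsilon_W$ from Assumption \ref{asm:2-lips}. The result is $C=L_{\rm pre}(\epsilon_d+\max(\epsilon_{\rm fg},\epsilon_{\rm rm})+{\cal O}(\epsilon_{\rm opt}))+\epsilon_{\rm exact}\max(L_{\rm exact},\epsilon_W)$.

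The "delicate transfer" you flagged is exactly what that feature-gap assumption supplies. With it, your own decomposition closes at once, since $\|({\bf I}-{\bf U}_*{\bf U}_*^\top)\preg(\boldsymbol{x})\|_2\le\epsilon_{\rm rm}+\epsilon_d$ and $\|{\bf U}_*{\bf U}_*^\top\preg(\boldsymbol{x})\|_2\le\epsilon_{\rm fg}+\epsilon_d$. Also, the paper, like you, simply identifies the minimizer of $J$ (built from pretrained covariances) with the lemma's ${\bf U}_*$ (top directions of $\exactsigmarm$) without justification.

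The trade-off between the two routes is this. Your crude bound is fully rigorous, but it exposes that the literal claim is nearly vacuous: a finite $C$ exists even independently of $\epsilon_{\rm opt}$. The paper's decomposition yields an interpretable constant, at the price of modeling assumptions and a lemma whose proof passes heuristically from expectation bounds to pointwise ones.

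One correction to your refined sketch concerns the forgetting-data case. On $\fgdata$, Lemma \ref{lm:sep-subspace} controls only the component of $\exactg(\boldsymbol{x})$ inside ${\rm range}({\bf U}_*)$. It therefore does not make $\|\exactf(\boldsymbol{x})\|_2$ small. The paper only bounds this term by $L_{\rm exact}\epsilon_{\rm exact}$, and it survives as a non-vanishing part of its constant. Your $\delta$ should likewise not be expected to be small.
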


\begin{table*}[t]
\centering
\caption{Comparison on computational costs w.r.t. the results in Table~\ref{tab:exp-timgnet-swint-class}. 
The detailed counts and percentages of optimized parameters are listed.
RTE is the run time efficiency in seconds on single NVIDIA GeForce RTX 4090 GPU.
For \ours, RTE measures only the optimization time of \eqref{eq:opt-obj}, as training features can be pre-calculated in practice and covariances are readily available for on-demand use.}
\resizebox{0.8\textwidth}{!}{
\begin{tabular}{c|cc|ccc}
\toprule
method & $\rmdata$ & $\fgdata$ & \# optimized param. (\%) & RTE (seconds) $\downarrow$ & additional requirements \\
\midrule
pretrained & $\usym{1F5F8}$ & $\usym{1F5F8}$ & 27,673,154 (100\%) & 882.05 & -\\
retrain & $\usym{1F5F8}$ & $\usym{1F5F4}$ & 27,673,154 (100\%) & 858.14 & -\\
\midrule
FT & $\usym{1F5F8}$ & $\usym{1F5F4}$ & 27,673,154 (100\%) & 428.24 & -\\
GA & $\usym{1F5F4}$ & $\usym{1F5F8}$ & 27,673,154 (100\%) & 13.49 & -\\
RL & $\usym{1F5F4}$ & $\usym{1F5F8}$ & 27,673,154 (100\%) & 13.19 & -\\
RL & $\usym{1F5F8}$ & $\usym{1F5F8}$ & 27,673,154 (100\%) & 438.48 & -\\
SalUn & $\usym{1F5F4}$ & $\usym{1F5F8}$ & 13,836,577 (50\%) & 15.00 & a parameter saliency mask\\
SalUn & $\usym{1F5F8}$ & $\usym{1F5F8}$ & 13,836,577 (50\%) & 450.23 & a parameter saliency mask\\
BT & $\usym{1F5F4}$ & $\usym{1F5F8}$ & 27,673,154 (100\%) & 14.32 & a randomly-initialized teacher model \\
L2UL & $\usym{1F5F4}$ & $\usym{1F5F8}$ & 27,673,154 (100\%) & 15225.05 & \makecell{a set of adversarial examples;\\a parameter importance mask}\\
COUN+RL & $\usym{1F5F4}$ & $\usym{1F5F8}$ & 27,673,154 (100\%) & 23.45 & - \\
DELETE & $\usym{1F5F4}$ & $\usym{1F5F8}$ & 27,673,154 (100\%) & 22.14 & a copy of the pretrained model\\
\midrule
{\bf\ours} & only $\presigmarm$ & only $\presigmafg$ & {\bf192,000} ({\bf0.69\%}) & {\bf0.56} & a projection matrix ${\bf U}\in{\rm St}(768,250)$ \\
\bottomrule

\end{tabular}}
\label{tab:exp-timgnet-swint-class-complexity}
\end{table*}

\begin{table*}[t]
\centering
\caption{Comparison results under single-class unlearning with ResNet50 on ImageNet-1K.}
\resizebox{0.9\textwidth}{!}{
\begin{tabular}{c|cc|cc cc|ccc}
\toprule
method & $\rmdata$ & $\fgdata$ & $\rm\bf Acc_{rm}^{tr}$ & $\rm\bf Acc_{fg}^{tr}$ & $\rm\bf Acc_{rm}^{te}$ & $\rm\bf Acc_{fg}^{te}$ & Avg.G.$\downarrow$ & RTE $\downarrow$& \# param. (\%) \\
\midrule
pretrained & $\usym{1F5F8}$ & $\usym{1F5F8}$ & $77.92_{\pm0.01}$ & $83.46_{\pm7.64}$ & $76.15_{\pm0.01}$ & $81.33_{\pm10.26}$ & - & - & 25,557,032 (100\%)\\
retrained & $\usym{1F5F8}$ & $\usym{1F5F4}$ & $80.25_{\pm0.16}$ (\textcolor{blue}{0.00}) & $0.00_{\pm0.00}$ (\textcolor{blue}{0.00}) & $75.72_{\pm0.04}$ (\textcolor{blue}{0.00}) & $0.00_{\pm0.00}$ (\textcolor{blue}{0.00}) & \textcolor{blue}{0.00} & - & 25,557,032 (100\%)\\
\midrule
GA & $\usym{1F5F4}$ & $\usym{1F5F8}$ & $71.87_{\pm1.80}$ (\textcolor{blue}{8.38}) & $0.00_{\pm0.00}$ (\textcolor{blue}{0.00}) & $71.42_{\pm1.89}$ (\textcolor{blue}{4.30}) & $0.00_{\pm0.00}$ (\textcolor{blue}{0.00}) & \textcolor{blue}{3.17} & 9.32 & 25,557,032 (100\%)\\
RL & $\usym{1F5F4}$ & $\usym{1F5F8}$ & $72.11_{\pm1.42}$ (\textcolor{blue}{8.14}) & $1.87_{\pm0.69}$ (\textcolor{blue}{1.87}) & $71.53_{\pm1.44}$ (\textcolor{blue}{4.19}) & $5.33_{\pm4.16}$ (\textcolor{blue}{5.33}) & \textcolor{blue}{4.88} & 14.40 & 25,557,032 (100\%)\\
SalUn & $\usym{1F5F4}$ & $\usym{1F5F8}$ & $71.56_{\pm0.68}$ (\textcolor{blue}{8.69}) & $2.67_{\pm0.77}$ (\textcolor{blue}{2.67}) & $70.84_{\pm0.60}$ (\textcolor{blue}{4.88}) & $4.67_{\pm3.06}$ (\textcolor{blue}{4.67}) & \textcolor{blue}{5.23} & 27.73 & 12,778,516 (50\%)\\
BT & $\usym{1F5F4}$ & $\usym{1F5F8}$ & $72.07_{\pm1.53}$ (\textcolor{blue}{8.18}) & $1.87_{\pm0.93}$ (\textcolor{blue}{1.87}) & $71.49_{\pm1.49}$ (\textcolor{blue}{4.23}) & $5.33_{\pm4.16}$ (\textcolor{blue}{5.33}) & \textcolor{blue}{4.90} & 16.84 & 25,557,032 (100\%)\\
COUN+RL & $\usym{1F5F4}$ & $\usym{1F5F8}$ & $70.04_{\pm2.01}$ (\textcolor{blue}{10.21}) & $1.36_{\pm0.35}$ (\textcolor{blue}{1.36}) & $69.72_{\pm1.82}$ (\textcolor{blue}{6.00}) & $4.67_{\pm3.06}$ (\textcolor{blue}{4.67}) & \textcolor{blue}{5.56} & 15.24 & 25,557,032 (100\%)\\
DELETE & $\usym{1F5F4}$ & $\usym{1F5F8}$ & $72.46_{\pm1.30}$ (\textcolor{blue}{7.79}) & $0.95_{\pm0.78}$ (\textcolor{blue}{0.95}) & $71.90_{\pm1.23}$ (\textcolor{blue}{\bf{3.82}}) & $2.00_{\pm3.46}$ (\textcolor{blue}{2.00}) & \textcolor{blue}{3.64} & 17.82 & 25,557,032 (100\%)\\
\midrule
{\bf\ours} & only $\presigmarm$ & only $\presigmafg$ & $79.96_{\pm0.38}$ (\textcolor{blue}{\bf0.29}) & $0.00_{\pm0.00}$ (\textcolor{blue}{\bf0.00}) & $69.40_{\pm0.35}$ (\textcolor{blue}{6.32}) & $0.00_{\pm0.00}$ (\textcolor{blue}{\bf0.00}) & \textcolor{blue}{\bf1.65} & {\bf0.94} & {\bf1,024,000} ({\bf4.01\%})\\
\bottomrule
\end{tabular}}
\label{tab:exp-imgnet-resnet50}
\end{table*}

\begin{table*}[t]
\centering
\caption{Comparison results under instance unlearning with {Swin-T} on {Tiny-ImageNet}.}
\resizebox{0.9\textwidth}{!}{
\begin{tabular}{c|cc|ccc|c|ccc}
\toprule
method & $\rmdata$ & $\fgdata$ & $\rm\bf Acc_{rm}^{tr}$ & $\rm\bf Acc_{fg}^{tr}$ & $\rm\bf Acc^{te}$ & {\bf MIA} & Avg.G.$\downarrow$ & RTE $\downarrow$& \# param. (\%)\\
\midrule
pretrained & $\usym{1F5F8}$ & $\usym{1F5F8}$ & $99.63_{\pm0.00}$ & $99.80_{\pm0.01}$ & $74.55_{\pm0.00}$ & $96.20_{\pm0.44}$ & - & 882.05 & 27,673,154 (100\%) \\
retrained & $\usym{1F5F8}$ & $\usym{1F5F4}$ & $99.67_{\pm0.02}$ (\textcolor{blue}{0.00}) & $75.97_{\pm1.00}$ (\textcolor{blue}{0.00}) & $75.03_{\pm0.10}$ (\textcolor{blue}{0.00}) & $64.47_{\pm1.25}$ (\textcolor{blue}{0.00}) & \textcolor{blue}{0.00} & 857.21 & 27,673,154 (100\%) \\
\midrule
GA & $\usym{1F5F4}$ & $\usym{1F5F8}$ & $98.37_{\pm0.53}$ (\textcolor{blue}{1.30}) & $93.63_{\pm0.46}$ (\textcolor{blue}{17.66}) & $72.36_{\pm0.55}$ (\textcolor{blue}{\bf2.67}) & $85.77_{\pm1.10}$ (\textcolor{blue}{21.30}) & \textcolor{blue}{10.73} & 7.92 & 27,673,154 (100\%)\\
RL & $\usym{1F5F4}$ & $\usym{1F5F8}$ & $97.56_{\pm0.62}$ (\textcolor{blue}{2.11}) & $95.00_{\pm1.64}$ (\textcolor{blue}{19.03}) & $70.57_{\pm0.50}$ (\textcolor{blue}{4.46}) & $88.83_{\pm0.99}$ (\textcolor{blue}{24.36}) & \textcolor{blue}{12.49} & 7.61 & 27,673,154 (100\%)\\
SalUn & $\usym{1F5F4}$ & $\usym{1F5F8}$ & $98.47_{\pm0.34}$ (\textcolor{blue}{\bf1.20}) & $97.23_{\pm1.01}$ (\textcolor{blue}{21.26}) & $71.86_{\pm0.24}$ (\textcolor{blue}{3.17}) & $89.60_{\pm1.08}$ (\textcolor{blue}{25.13}) & \textcolor{blue}{12.69} & 12.79 & 13,836,577 (50\%)\\
BT & $\usym{1F5F4}$ & $\usym{1F5F8}$ & $96.46_{\pm0.44}$ (\textcolor{blue}{3.21}) & $92.83_{\pm0.92}$ (\textcolor{blue}{16.86}) & $69.52_{\pm0.35}$ (\textcolor{blue}{5.51}) & $85.23_{\pm0.78}$ (\textcolor{blue}{20.76}) & \textcolor{blue}{11.59} & 8.23 & 27,673,154 (100\%)\\
L2UL& $\usym{1F5F4}$ & $\usym{1F5F8}$ & $94.77_{\pm0.51}$ (\textcolor{blue}{4.90}) & $74.90_{\pm0.10}$ (\textcolor{blue}{\bf1.07}) & $68.69_{\pm0.27}$ (\textcolor{blue}{6.34}) & $73.90_{\pm0.46}$ (\textcolor{blue}{9.43}) & \textcolor{blue}{5.44} & 7919.11 & 27,673,154 (100\%)\\
COUN+RL& $\usym{1F5F4}$ & $\usym{1F5F8}$ & $98.30_{\pm0.32}$ (\textcolor{blue}{1.37}) & $96.47_{\pm0.21}$ (\textcolor{blue}{20.50}) & $71.45_{\pm0.26}$ (\textcolor{blue}{3.58}) & $89.02_{\pm0.82}$ (\textcolor{blue}{24.55}) & \textcolor{blue}{12.50} & 11.45 & 27,673,154 (100\%)\\
DELETE & $\usym{1F5F4}$ & $\usym{1F5F8}$ & $91.84_{\pm2.12}$ (\textcolor{blue}{7.83}) & $77.57_{\pm2.39}$ (\textcolor{blue}{1.60}) & $65.44_{\pm1.43}$ (\textcolor{blue}{9.59}) & $68.67_{\pm1.50}$ (\textcolor{blue}{4.20}) & \textcolor{blue}{5.81} & 8.42 & 27,673,154 (100\%)\\
\midrule
{\bf\ours} & only $\presigmarm$ & only $\presigmafg$ & $92.65_{\pm0.54}$ (\textcolor{blue}{7.02}) & $82.97_{\pm1.17}$ (\textcolor{blue}{7.00}) & $68.14_{\pm0.42}$ (\textcolor{blue}{6.89}) & {${64.47_{\pm2.36}}$ (\textcolor{blue}{\bf0.00})} & \bf{\textcolor{blue}{5.23}} & \bf{0.50} & \bf{230,400} (0.83\%)\\
\bottomrule
\end{tabular}}
\label{tab:exp-timgnet-swint-instance}
\end{table*}

\section{Experiments}

In this section, extensive experiments are conducted to evaluate \ours, showing its potentials of low-dimensional feature subspace perspective for MU.
In experiments, we primarily focus on the {\it class-centric} unlearning \cite{zhou2025decoupled}, widely studied in real-world scenarios, as data from different users can represent varied classes, where setups of {\it extreme} and {\it continual} unlearning are further explored in Appendix \ref{app:sec:exp-extreme-continual}. 
The {\it instance-wise} unlearning is also investigated, which is another significant unlearning type.
More empirical results and discussions on real-world applications and generation tasks are provided in Appendices \ref{app:sec:application} and \ref{app:sec:generation}.

\subsection{Setups}

\paragraph{Datasets and models.}
Experiments are executed on Tiny-ImageNet \cite{le2015tiny} (200 classes, $64\times64\times3$) and ImageNet-1K \cite{deng2009imagenet} (1,000 classes, $224\times224\times3$).
We skip comparisons on other easier datasets with smaller scales and distinctively fewer classes, e.g., CIFAR \cite{krizhevsky2009learning} and SVHN \cite{netzer2011reading}, which have been extensively evaluated and yet show almost incremental improvements with limited challenges for class-centric unlearning.
Swin-T \cite{liu2021swin} and ResNet50 \cite{he2016deep} are evaluated on Tiny-ImageNet and ImageNet-1K, respectively.

\paragraph{Baselines.}
We compare \ours with a series of strong baselines:
Fine-Tuning (FT) \cite{warnecke2021machine}, Gradient Ascent (GA) \cite{thudi2022unrolling}, Random Labeling (RL) \cite{golatkar2020eternal}, Salun \cite{fansalun}, Bad Teaching (BT) \cite{chundawat2023can}, Learn to UnLearn (L2UL) \cite{cha2024learning}, COUN \cite{khalilcoun} and DELETE \cite{zhou2025decoupled}. 
Results of the pretrained $\pref$ and the retrained $\exactf$ are also included.
See Appendices \ref{app:sec:related-work} and \ref{app:sec:settings} for baseline outlines and implementation details.

\paragraph{Restricted data access.}
We underscore the significance of restricted data access for MU.
If both $\fgdata$ and $\rmdata$ are available, even the naive RL method can nearly approach the ideal performance of $\exactf$, as shown in our Table \ref{tab:exp-timgnet-swint-class}.
Recent studies have focused on using only $\fgdata$ for practicality in real-world setups \cite{cha2024learning,zhou2025decoupled}, while our \ours advances this direction through a distinctively novel way of optimizing on two feature covariance matrices w.r.t. $\fgdata$ and $\rmdata$, without accessing raw data.
Accordingly, our comparisons are presented among baselines operating on $\fgdata$. 
For completeness, methods involving $\rmdata$ are included as reference (marked in {\textcolor{gray}{gray}}).

\paragraph{Metrics.}
Four metrics on learning accuracy are leveraged \cite{zhou2025decoupled}:
$\rm Acc_{fg}^{tr}$ on the forgetting $\fgdata$ and $\rm Acc_{rm}^{tr}$ on the remaining  $\rmdata$ of the training data, $\rm Acc_{fg}^{te}$ on the  forgetting  $\fgdata^{\rm te}$ and $\rm Acc_{rm}^{te}$ on the remaining  $\rmdata^{\rm te}$ of the test data.
In addition, the Membership Inference Attack (MIA) success rate \cite{shokri2017membership} is adopted, measuring the proportion of samples in $\fgdata$ that gets memorized by the unlearned model. We mark the best results highlighted in bold font. 
Note that the performance gap to the retrained $\exactf$ essentially reflects the unlearning efficacy of approximate MU methods. 
This is the key evaluation criterion for MU and is marked in {\textcolor{blue}{blue}} within parentheses. 
Hence, the overall performances are indicated by the average of the accuracy and MIA gap to $\exactf$ (Avg.G.).

\subsection{Main Comparisons}
\label{sec:exp:comparison}

\paragraph{Multi/Single-class unlearning.}
Tables~\ref{tab:exp-timgnet-swint-class} and~\ref{tab:exp-timgnet-swint-class-complexity} present comparisons on the unlearning results and computational costs with Swin-T on Tiny-ImageNet, where 4 of 200 classes are randomly selected as $\fgdata$.
Table~\ref{tab:exp-timgnet-swint-class} shows that unlearning without access to $\rmdata$ leads to substantial performance drops for baselines such as RL and SalUn, showing their strong reliance to $\rmdata$. 
Our \ours leverages the two feature covariance matrices and achieves state-of-the-art unlearning performance with a smallest average gap of $0.85$ to the retrained model.
On efficiency, Table~\ref{tab:exp-timgnet-swint-class-complexity} shows that the parameter number and the Run Time Efficiency (RTE) in our \ours are less by orders of magnitude than that of other baselines, as our subspace learning simply optimizes a projection matrix operating on two covariance matrices and others generally update the pretrained model with the original training data. 
For single-class unlearning, we consider the much more challenging ImageNet-1K dataset with ResNet50, where 1 of 1,000 classes is randomly selected as $\fgdata$ with 3 repeated runs.
Given the extremely large scale of this dataset ($\rmdata$ contains over 1.28M samples), we therefore compare with methods that only utilize $\fgdata$.
The results in Table~\ref{tab:exp-imgnet-resnet50} demonstrates that \ours outperforms other baselines with the smallest average gap, together with the shortest running time (less than 1 second) and the fewest number of parameters (reduced by approximately 96\%).

\paragraph{Instance unlearning.}
\ours is also applicable to instance-wise unlearning with results shown in Table~\ref{tab:exp-timgnet-swint-instance}, where 1,000 of 100,000 training samples (1\%) in Tiny-ImageNet are randomly selected as $\fgdata$.
Accuracy on the whole test dataset is evaluated as $\rm Acc^{te}$.
Under this random forgetting setting, $\fgdata$ and $\rmdata$ contain samples from all classes, which is more challenging to seek a feature subspace well differentiating $\fgdata$ and $\rmdata$.
Nevertheless, in Table~\ref{tab:exp-timgnet-swint-instance}, the low MIA gap of \ours implies that its learned subspace can effectively remove information related to $\fgdata$.
Furthermore, \ours attains superior performance and efficiency on the overall Avg.G. metric across all baselines by optimizing only <1\% parameters w.r.t. the given model in 0.5 seconds.

\paragraph{More unlearning setups, applications and generation tasks.} 
Due to space limitation, we discuss two practical unlearning scenarios in Appendix~\ref{app:sec:exp-extreme-continual}: {\it extreme unlearning} of forgetting a very high ratio of the training samples (deleting data from a large number of users) and {\it continual unlearning} with multiple unlearning requests in sequence.
Additionally, we investigate MU in real-world applications, including {\it face recognition} and {\it emotion recognition}, in Appendix~\ref{app:sec:application}.
Our \ours maintains superior performance across these unlearning setups and applications.
Furthermore, Appendix \ref{app:sec:generation} provides a preliminary exploration on the low-dimensional feature subspace for unlearning in generation tasks, while this field still remains underexplored.
Our initial results suggest the potentials of \ours for MU in generation tasks and offer beneficial insights for future work.

\subsection{Ablation Study and Sensitivity Analysis}
\label{sec:exp:ablation-sensitivity}

\begin{table}[t]
\centering
\caption{Ablation study on $\objrm$ and $\objfg$.}
\resizebox{0.45\textwidth}{!}{
\begin{tabular}{c|cccc|c}
\toprule
objective & $\rm\bf Acc_{rm}^{tr}$ & $\rm\bf Acc_{fg}^{tr}$ & $\rm\bf Acc_{rm}^{te}$ & $\rm\bf Acc_{fg}^{te}$ & {\bf MIA}\\
\midrule
None & 97.11 & 98.10 & 71.61 & 65.00 & 86.10 \\
$\objfg$ only & 89.51 & 11.50 & 65.40 & 4.00 & 0.00 \\
$\objrm$ only & 99.19 & 87.80 & 75.07 & 53.00 & 58.35 \\
$\objrm+\objfg$ & 98.66 & 1.10 & 74.27 & 0.00 & 0.00 \\
\bottomrule
\end{tabular}}
\label{tab:exp-ablation}
\end{table}

\paragraph{Objective terms $\objrm$ and $\objfg$.}
We conduct experiments to explore the roles of the two loss terms in the optimization objective ${J}({\bf U})$ of (\ref{eq:opt-obj}).
These two terms together aim at the dual goals in MU: $\objrm$ preserves performance on $\rmdata$ and $\objfg$ guarantees forgetting on $\fgdata$.
By respectively removing $\objrm$ and $\objfg$ from ${J}({\bf U})$, we report the results in Table \ref{tab:exp-ablation}. 
It shows that optimizing $\bf U$ only with $\objfg$ suppresses $\rm Acc_{fg}$ and MIA, while failing to preserve $\rm Acc_{rm}$.
Employing only $\objrm$ manages to achieve high $\rm Acc_{rm}$, but leaves $\fgdata$ hardly forgotten with high $\rm Acc_{fg}$ and MIA.
Hence, a joint optimization with both $\objrm$ and $\objfg$ is crucial for effective MU.

\paragraph{Subspace dimensions $s$.}
The projection matrix $\bf U$ is confined on the Stiefel manifold ${\rm St}(d,s)$, consisting of $s$ orthonormal bases in columns. 
Here, the dimension $s$ determines the captured feature information in the projected subspace.
A sensitivity analysis w.r.t. varied values of $s$ on the unlearning performance is provided in the left panel of Fig.\ref{fig:sensitivity}.
As illustrated, there is a clear trade-off: a larger $s$ preserves higher $\rm Acc_{rm}$ but sacrifices the ability to forget $\fgdata$ and incurs higher computation, whereas a smaller $s$ effectively forgets $\fgdata$ but compromises $\rm Acc_{rm}$.
From our evaluations, we recommend initializing $s$ with preserving 95\% of the explained variance in $\presigmarm$, as this choice readily provides good empirical performance for different settings and tasks evaluated in this work.

\paragraph{Implementation position for subspace learning.}
\label{sec:implement}
The subspace learning in our \ours is primarily applied to the penultimate-layer features $\boldsymbol{z}$, where the projection matrix $\bf U$ is inserted between $\preg$ and $\preh$.
In practice, the implementation of \ours is quite flexible to be inserted into different positions in $\pref$. 
Here, we implement \ours in different layers with $\pref$ exemplified on ResNet18, with results in Fig.\ref{fig:sensitivity} (right).
\ours demonstrates flexibility by maintaining competitive performance when applied in earlier positions ({\tt layer1}, {\tt layer2}, {\tt layer3}), compared to its default setting of {\tt layer4}.
It further indicates that the feature covariance matrices in earlier layers contain sufficient information to achieve effective MU from a low-dimensional feature subspace perspective.
However, it can cost more computations, due to the larger feature dimensions and more optimization steps with early-layer implementation. 
Appendix \ref{sec:appendix:pos} provides detailed settings and related results on computations.
Considering the trade-off between performance and efficiency, it is recommended to implement \ours in the penultimate layer as done in main evaluations.

\begin{figure}[t]
    \centering
    \includegraphics[width=\linewidth]{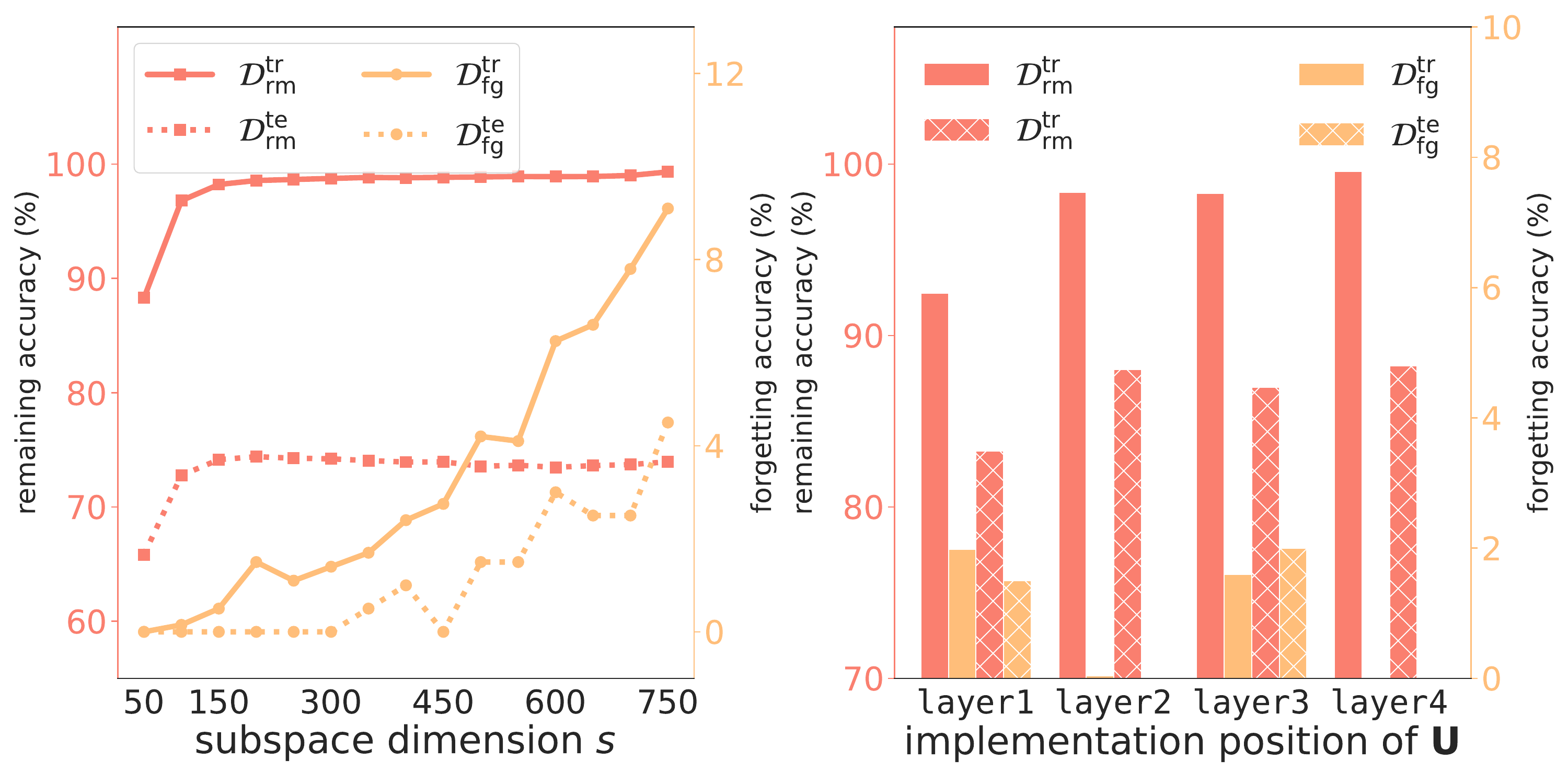}
    \caption{A sensitivity analysis on {\it (i)} subspace dimensions $s$ ({\bf left}) and {\it (ii)} different positions of the projection matrix $\bf U$ ({\bf right}).
    }
    \label{fig:sensitivity}
\end{figure}

\section{Conclusion and Discussion}
This work presents a low-dimensional feature subspace perspective for MU, which is by far underexplored but shows great potentials with our evaluations.
Our key insight lies in the promising  separability between $\rmdata$ and $\fgdata$ in feature subspaces and seeks to learn such a subspace for the pretrained model $\pref$.
Accordingly, we propose \ours through PCA-based techniques to learn a projection matrix, thereby constructing a feature subspace that preserves the knowledge of $\rmdata$ and diminishes that of $\fgdata$.
In optimization, only one-shot feature fetching is required to compute the covariances, avoiding direct visits and massive 
reloads to raw training data for privacy protection. 
\ours simply updates a small-size projection matrix and in implementation serves as a plug-in module to $\pref$ without modifying 
the entire parameters, which is of great practicality for handling multiple 
unlearning requests. 
\ours reduces both the parameter number and the running time by orders of magnitude during training, while achieving superior unlearning accuracy, as supported by our extensive evaluations.


Since \ours does not update the parameters of $\pref$, it 
is not appropriate to be directly applied to 
generation tasks for MU, 
as the projection matrix needs to be carefully tailored to 
the specific generative model and  its particular unlearning scenario. 
We primarily explore this direction in Appendix \ref{app:sec:generation} with showing  potentials of feature subspace learning for MU in generation tasks, which we hope could 
inspire sophisticated future investigations.
Moreover, we provide in-depth discussions on some key aspects of \ours in Appendix \ref{app:sec:discussion}, including the separability, scalability and privacy protection.




\section*{Impact Statement}

This paper presents work aimed at advancing the privacy protection of modern deep learning models by integrating classic machine learning techniques. 
This advancement has the potential to benefit a wide range of fields and societal applications that is related to user privacy protection. 
By releasing our code along with this work, we aim to provide the machine unlearning research community with a fresh methodological perspective and an interpretable and practical plug-in module that can be optimized with superior effectiveness and efficiency. 
While we do not foresee direct negative societal impacts arising from this work, we intend to further refine and extend the method in future research.





\bibliography{reference}
\bibliographystyle{icml2026}

\newpage
\appendix
\onecolumn

\setcounter{table}{0}
\renewcommand*{\thetable}{S\arabic{table}}

Appendices are organized as follows:
\begin{itemize}
    \item Appendix \ref{app:sec:related-work} provides an overview of related work in machine unlearning and introduces the baseline MU methods included in our empirical comparisons.
    \item Appendix \ref{app:sec:theory-proof} contains proofs of Lemma \ref{lm:sep-subspace} and Theorem \ref{thm:sun-gap} in the main text.
    \item Appendix \ref{app:sec:settings} presents detailed setups of involved MU methods in experiments of {\it multi-class}, {\it single-class} and {\it instance-wise} unlearning in the main text, together with results about the implementation positions of \ours.
    \item Appendix \ref{app:sec:exp-extreme-continual} supplements comparisons under two practical unlearning setups of {\it extreme unlearning} and {\it continual unlearning}.
    \item Appendix \ref{app:sec:application} supplements comparisons on machine unlearning under two real-world applications of {\it face recognition} and {\it emotion recognition}.
    \item Appendix \ref{app:sec:generation} gives a preliminary exploration on the low-dimensional feature subspaces for MU in generation tasks.
    \item Appendix \ref{app:sec:discussion} discusses several key aspects of \ours, including its separability, scalability, and implications for privacy protection.
\end{itemize}

\section{Related Work on Machine Unlearning}
\label{app:sec:related-work}

Machine unlearning aims at maintaining prediction performances of a well-trained model $\pref$ when removing (forgetting) specific data  \cite{cao2015towards,ginart2019making,bourtoule2021machine}. 
In general, such forgetting data $\fgdata$ can mainly be categorized into {\it class-wise} and {\it instance-wise} types.
The former indicates that $\fgdata$ includes all the samples from one or multiple classes in $\cal Y$ \cite{tarun2023fast,chundawat2023zero,zhou2025decoupled} while the latter implies that $\fgdata$ can be a random subset of the entire training data ${\cal D}$ with mixed classes and even all classes \cite{kim2022efficient,fansalun,cha2024learning}.

A golden standard of MU is to retrain the model from scratch on the remaining data $\rmdata$ only, known as {\it the exact MU} \cite{bourtoule2021machine,thudi2022necessity}.
The resulting retrained model $\exactf$ never sees the forgetting data $\fgdata$ and thereby is an oracle in evaluating MU performance.
However, the retraining of the given model on $\rmdata$ can suffer from heavy computational overload, which motivates a series of researches optimizing the pretrained model $\pref$ by approaching to the ideal performance of the exact retrained model $\exactf$, namely {\it the approximate MU}.
Existing approximate MU methods generally take different techniques to update $\pref$, usually requiring to visit the  forgetting data $\fgdata$ and/or the  remaining data $\rmdata$ from the original training dataset.
In the following, the involved MU methods in our comparisons are outlined, covering mainstream unlearning techniques and marking their reliance to $\fgdata$ and $\rmdata$.

{\bf FT} \cite{golatkar2020eternal,warnecke2021machine} fine-tunes the pretrained model $\pref$ only on the remaining data $\rmdata$ via minimizing the cross-entropy loss exemplified on the classification task.
In most cases, $\fgdata$ only accounts for a small proportion of the entire training set, and thereby the time consuming for FT per epoch remains relatively high.

{\bf GA} \cite{graves2021amnesiac,thudi2022unrolling} fine-tunes $\pref$ by applying gradient ascent only on the forgetting data $\fgdata$, implemented by maximizing the cross entropy loss.
GA is highly sensitive to the learning rate and usually leads to a substantial drop on the performance of $\rmdata$.

{\bf RL} \cite{golatkar2020eternal} fine-tunes $\pref$ by replacing the labels of $\fgdata$ with new different random labels.
RL is implemented with both $\fgdata$ and $\rmdata$ involved and can easily approach the performance of the ideal retrained model $\exactf$.
Nevertheless, existing researches have highlighted the importance of MU with only $\fgdata$ accessible \cite{cha2024learning,zhou2025decoupled}.
Under this more practical setting with limited data access, RL excluding $\rmdata$ in optimization shows unsatisfactory accuracy on $\rmdata$ and $\fgdata$, verified by our empirical results in this work.

{\bf SalUn} \cite{fansalun} proposes to mask those parameters in $\pref$ that are sensitive to $\fgdata$.
In implementation, a parameter saliency mask is derived based on $\fgdata$ only, and then is applied to the fine-tuning of $\pref$.
The fine-tuning of SalUn follows RL and similarly faces the aforementioned challenge as RL.
SalUn excluding $\rmdata$ in the fine-tuning shows uncompetitive unlearning results and the saliency mask does not bring significant performance gains to RL when  $\fgdata$ is only available.
Another technique with a similar idea of leveraging parameter sparsity is applying an additional $\ell_1$ regularization on the parameters of $\pref$ based on RL \cite{jia2023model}.
Besides SalUn, SFR-on \cite{huang2024unified} further introduces gradient information via Hessians into the parameter mask and proposes a unified optimization framework with a fast-slow weight update.
SFR-on also heavily relies on $\rmdata$ in computing the Fisher information matrix and the loss function.

{\bf BT} \cite{chundawat2023can} introduces knowledge distillation for approximated MU.
There are two teacher models in BT: an incompetent teacher model being a randomly-initialized network and a competent teacher model being the pretrained model $\pref$.
The student model is also the pretrained network $\pref$.
During unlearning, the competent/incompetent teacher distills information from $\rmdata$/$\fgdata$ to the student via the KL divergence.
The competent teacher and the student actually share the same network in memory, implying that the competent teacher gets updated during distillation.
Although both $\rmdata$ and $\fgdata$ are involved, our experiments show that BT solely with $\fgdata$, i.e., a single incompetent teacher, still achieves good results.
SCRUB \cite{kurmanji2023towards} is also based on distillation, with the pretrained model as the teacher.
In SCRUB, the pretrained model is optimized under the guidance of the teacher by minimizing/maximizing the KL divergence with $\rmdata$/$\fgdata$ and a cross-entropy loss on $\rmdata$.

{\bf L2UL} \cite{cha2024learning} introduces adversarial examples (AEs) \cite{madry2018towards} and weight importance \cite{aljundi2018memory} to promote MU given access only to $\fgdata$.
As claimed in L2UL, AEs w.r.t. $\fgdata$ can mimic the remaining data $\rmdata$, so that minimizing the cross-entropy loss on those AEs can benefit the performance on $\rmdata$.
A parameter importance mask is further applied, where changes of weights that are less important in classifying $\fgdata$ get penalized.
L2UL is particularly superior in instance-wise unlearning.
However, the computational demand of L2UL is prohibitively heavy.
Generating AEs with its default configuration (100 iterations, 200 AEs/image) for a dataset as large as ImageNet-1K (over 1.28M training images) renders the approach impractical.

{\bf COUN} \cite{khalilcoun} leverages the contrastive learning \cite{chen2020simple} for MU.
COUN basically follows FT, i.e., fine-tunes $\pref$ by supervised training only on the remaining data $\rmdata$, and adds an additional contrastive learning loss on $\rmdata$.
The rationale behind is to implicitly push forgetting representations towards those remaining representations with the highest semantic similarity by regularizing the augmented views of $\rmdata$ via the contrastive loss.
As originally reported, COUN is trained solely on $\rmdata$ and improves unlearning performance over FT. 
To explore whether its contrastive learning paradigm can consistently enhance performance when only $\fgdata$ is available, we adapt COUN to operate with the RL method using only $\fgdata$, denoted as ``COUN+RL'' in our comparisons.
Under this restricted data access, the contrastive learning implemented in COUN brings limited performance gains, as demonstrated in our results.

{\bf DELETE} \cite{zhou2025decoupled} is a distillation-based method relying only on $\fgdata$.
In DELETE, a copy of the pretrained network is utilized as the teacher model, and the key is to apply a mask to the outputs from the teacher.
This mask aims at altering the predictive probabilities of $\fgdata$ to approximate the ideal case and is implemented in a simple way: positions in the mask w.r.t. the ground-true labels of $\fgdata$ are set to negative infinity and others are set to zero.

Beyond the baselines previously discussed, we further review several subspace-based MU methods that are more closely relevant with our \ours.
These methods typically define their subspaces in a {\it non-learnable} way based on network {\it parameters}.
Gradient subspaces in convolution neural networks are identified in \cite{li2023subspace,fu2024client} through eigen-decomposition.
In \cite{lizzo2025unlearn}, the subspace in large language models is constructed by applying the Gram-Schmidt process to the singular vectors of parameters.
A latest unlearning method SEMU \cite{sendera2025semu} performs SVD disentanglement on the gradients of $\fgdata$ to identify and update parameters mostly important for unlearning. 
However, our empirical results show that SEMU yields poor unlearning performance on SwinT.
This limitation possibly stems from the fact that its decomposition, applied to parameters of 2d convolution and linear layers, may not adequately adapt to the attention structure in transformers.
Thereby we omit comparisons with SEMU.
In contrast, our \ours fundamentally differs from these methods by leveraging a {\it learnable feature subspace} for unlearning.

Researches in MU have been extended to a wide range of fields with different tasks.
The class-wise and concept forgetting in image generation is explored in \cite{fansalun,huang2024unified}.
Privacy attacks against MU methods and corresponding defenses are discussed in \cite{xiao2025reminiscence}.
Unlearning from adversarially trained models is exploited to  achieve superior unlearning performance and also strong adversarial robustness \cite{liu2023muter}.
The concept of unnecessary unlearning is formalized in \cite{li2025funu} with an algorithm to filter those unnecessary requests.
MU can be an effective tool for defending the backdoor attacks \cite{class2liu2022backdoor,liu2024backdoor} through forgetting the backdoor triggers hidden in data.
Moreover, MU in large language models has also received significant attention with extensive researches in recent years \cite{xu2025obliviate,liu2025rethinking}.

\section{Proofs}
\label{app:sec:theory-proof}

\subsection{Proof of Lemma \ref{lm:sep-subspace}}
\begin{proof}

The ideal unlearning model $\exactf$ is trained solely on the remaining data $\rmdata$, such that
$\exactf$ sufficiently learns knowledge from $\rmdata$ and has never seen the forgetting data $\fgdata$.
We thereby assume that the remaining features $\exactfeatfg$ learned by the backbone of $\exactf$ satisfy that the variance of $\exactfeatrm$ is distributed compactly. 
That is, for the eigenvalues $\lambda_1\geq\lambda_2\geq\cdots\geq\lambda_d >0$ of $\exactsigmarm\in\mathbb{R}^{d\times d}$, there exists a small positive integer $s$ such that $s\ll d$, we have $\frac{\sum_{i=1}^s\lambda_i}{\sum_{i=1}^d\lambda_i}\geq1-\xi$, where $\xi >0$ is a very small positive number, e.g. $\xi=0.05$.

For any feature $\boldsymbol{z}_{\rm rm}^{\rm exact}$ of $\rmdata$ from $\exactg$, we have:
\begin{equation}
\mathbb{E}\left[\|({\bf I}-{\bf U}_*{\bf U}_*^\top)(\boldsymbol{z}_{\rm rm}^{\rm exact}-\boldsymbol{\mu}_{\rm rm}^{\rm exact})\|_2^2\right]
=\sum_{i=s+1}^d\lambda_i=\xi\cdot{\rm Tr}(\exactsigmarm),
\end{equation}
where $\boldsymbol{\mu}_{\rm rm}^{\rm exact}$ denotes the means of $\exactfeatrm$.
Therefore, $\|({\bf I}-{\bf U}_*{\bf U}_*^\top)\exactg(\boldsymbol{x})\|_2
\leq\epsilon_{\rm rm}$ for $\boldsymbol{x}\in\rmdata$ holds with $\epsilon_{\rm rm}\propto\sqrt{\xi}$.

Similarly, for any features $\boldsymbol{z}_{\rm fg}^{\rm exact}$ of $\fgdata$ from $\exactg$, we have:
\begin{equation}
\begin{aligned}
\|{\bf U}_*{\bf U}_*^\top\boldsymbol{z}_{\rm fg}^{\rm exact}\|_2^2
&=\|{\bf U}_*{\bf U}_*^\top(\boldsymbol{\mu}_{\rm fg}^{\rm exact}+\boldsymbol{z}_{\rm fg}^{\rm exact}-\boldsymbol{\mu}_{\rm fg}^{\rm exact})\|_2^2\\
&\leq\left(\|{\bf U}_*{\bf U}_*^\top\boldsymbol{\mu}_{\rm fg}^{\rm exact}\|_2+\|{\bf U}_*{\bf U}_*^\top(\boldsymbol{z}_{\rm fg}^{\rm exact}-\boldsymbol{\mu}_{\rm fg}^{\rm exact})\|_2\right)^2.
\end{aligned}
\end{equation}
We can assume the boundedness of feature means: $\|\boldsymbol{\mu}_{\rm fg}^{\rm exact}\|_2\leq M_{\boldsymbol{\mu}}$.
Since ${\bf U}_*$ is derived from $\rmdata$, and $\boldsymbol{\mu}_{\rm fg}^{\rm exact}$ comes from data that are not involved during the training of $\exactf$, we introduce a coefficient $\alpha$ to measure the alignment between $\boldsymbol{\mu}_{\rm fg}^{\rm exact}$ and the subspace ${\bf U}_*$: $\|{\bf U}_*{\bf U}_*^\top\boldsymbol{\mu}_{\rm fg}^{\rm exact}\|_2\leq\alpha\cdot\|\boldsymbol{\mu}_{\rm fg}^{\rm exact}\|_2\leq\alpha_{\rm max}\cdot M_{\boldsymbol{\mu}}$ with an upper bound $\alpha_{\rm max}$ for $\alpha$.

Besides, the expectation of $\|{\bf U}_*{\bf U}_*^\top(\boldsymbol{z}_{\rm fg}^{\rm exact}-\boldsymbol{\mu}_{\rm fg}^{\rm exact})\|_2$ is given by
\begin{equation}
\begin{aligned}
\mathbb{E}\left[\|{\bf U}_*{\bf U}_*^\top(\boldsymbol{z}_{\rm fg}^{\rm exact}-\boldsymbol{\mu}_{\rm fg}^{\rm exact})\|_2^2\right]
&=\mathbb{E}\left[(\boldsymbol{z}_{\rm fg}^{\rm exact}-\boldsymbol{\mu}_{\rm fg}^{\rm exact})^\top{\bf U}_*{\bf U}_*^\top(\boldsymbol{z}_{\rm fg}^{\rm exact}-\boldsymbol{\mu}_{\rm fg}^{\rm exact})\right]\\
&={\rm Tr}({\bf U}_*^\top\exactsigmafg{\bf U}_*).
\end{aligned}
\end{equation}
We denote $\sigma^2_{\rm max}$ to bound the maximal projection variance: $\|{\exactsigmafg}^{1/2}{\bf U}_*\|_2^2\leq\sigma^2_{\rm max}$, and then we have $\|{\bf U}_*{\bf U}_*^\top(\boldsymbol{z}_{\rm fg}^{\rm exact}-\boldsymbol{\mu}_{\rm fg}^{\rm exact})\|_2\leq\sqrt{s}\cdot\sigma_{\rm max}+{\cal O}(1)$.
Therefore, $\forall  \boldsymbol{x}\in\fgdata$, $\|{\bf U}_*{\bf U}_*^\top\exactg(\boldsymbol{x})\|_2
\leq\epsilon_{\rm fg}$ holds for $\epsilon_{\rm fg}=\alpha_{\rm max}\cdot M_{\boldsymbol{\mu}}+\sqrt{s}\cdot\sigma_{\rm max}+{\cal O}(1)$.
The proof finishes.
\end{proof}

\begin{remark}
We particularly discuss the scenarios when $\fgdata$ and $\rmdata$ share high similarities.
In this case, the exact MU model $\exactf$, though trained on $\rmdata$ only,  inevitably encode partial knowledge of $\fgdata$.
Correspondingly, in our proof above, the projection matrix ${\bf U}_*$ from the eigen-decomposition on $\exactfeatrm$ also contains variance of $\exactfeatfg$, resulting in larger upper bounds $\alpha_{\rm max}$ and $\sigma_{\rm max}$ for the alignment coefficient $\alpha$ and the projection variance $\|{\exactsigmafg}^{1/2}{\bf U}_*\|_2^2$.
Then, the bound $\epsilon_{\rm fg}$ for $\|{\bf U}_*{\bf U}_*^\top\exactg(\boldsymbol{x})\|_2$ becomes even looser, and the separability of ${\bf U}_*$ might be less effective.
\end{remark}

\subsection{Proof of Theorem \ref{thm:sun-gap}}

We firstly present the assumptions required for the proof of Theorem \ref{thm:sun-gap}.

\begin{assumption}
\label{asm:1-bound}
The pretrained network $\pref$ and the retrained network $\exactf$ differ in whether the forgetting data are involved into training, and show nearly the same performance on the remaining data.
In this sense, for any $\boldsymbol{x}\in{\cal D}$, we can assume that differences in their learned features are bounded: 
$\|\preg(\boldsymbol{x})-\exactg(\boldsymbol{x})\|_2\leq\epsilon_d$, and the output features of $\exactf$ are assumed to be bounded  as
$\|\exactg(\boldsymbol{x})\|_2\leq\epsilon_{\rm exact}$.
\end{assumption}

\begin{assumption}
\label{asm:2-lips}
The last linear layer $\preh(\cdot)$ in the pretrained network $\pref$ is $L_{\rm pre}$-Lipschitz continuous:
\begin{equation}
\|\preh(\boldsymbol{z}_1)-\preh(\boldsymbol{z}_2)\|_2
\leq L_{\rm pre}\cdot\|\boldsymbol{z}_1-\boldsymbol{z}_2\|_2,
\quad\forall\boldsymbol{z}_1,\boldsymbol{z}_2\in\mathbb{R}^d.
\end{equation}
Specifically, the linear layer $\preh(\cdot)$ with parameters ${\bf W}_{\rm pre}$, i.e., $\preh(\boldsymbol{z})={\bf W}_{\rm pre}^\top\boldsymbol{z}$, has the Lipschitz constant $L_{\rm pre}=\|{\bf W}_{\rm pre}\|_2$.
Similarly, the linear layer $\exacth$ in the exact MU model $\exactf$ has the Lipschitz constant $L_{\rm exact}=\|{\bf W}_{\rm exact}\|_2$ with its parameters ${\bf W}_{\rm exact}$.
We assume that the difference between parameters ${\bf W}_{\rm pre}$ and ${\bf W}_{\rm exact}$ is bounded by $\|{\bf W}_{\rm pre}-{\bf W}_{\rm exact}\|_2\leq\epsilon_W$. 
\end{assumption}

The proof of Theorem \ref{thm:sun-gap} is given below.
\begin{proof}

We consider two cases that the input $\boldsymbol{x}$ is from $\rmdata$ and $\fgdata$, respectively.
For any data $\boldsymbol{x}\in\rmdata$, we have:
\begin{equation}
\begin{aligned}
\label{thm:eq:sun-gap-1}
\|f_{\bf\hat U}(\boldsymbol{x})-\exactf(\boldsymbol{x})\|_2
&\leq\|\preh({\bf\hat U}{\bf\hat U}^\top\preg(\boldsymbol{x}))
-\preh({\bf\hat U}{\bf\hat U}^\top\exactg(\boldsymbol{x}))\|_2\\
&+\|\preh({\bf\hat U}{\bf\hat U}^\top\exactg(\boldsymbol{x}))
-\preh(\exactg(\boldsymbol{x}))\|_2\\
&+\|\preh(\exactg(\boldsymbol{x}))
-\exacth(\exactg(\boldsymbol{x}))\|_2.
\end{aligned}
\end{equation}

We denote the 3 terms in Eqn.(\ref{thm:eq:sun-gap-1}) as follows:
\begin{equation}
\begin{aligned}
&A=\|\preh({\bf\hat U}{\bf\hat U}^\top\preg(\boldsymbol{x}))
-\preh({\bf\hat U}{\bf\hat U}^\top\exactg(\boldsymbol{x}))\|_2,\\
&B=\|\preh({\bf\hat U}{\bf\hat U}^\top\exactg(\boldsymbol{x}))
-\preh(\exactg(\boldsymbol{x}))\|_2,\\
&C=\|\preh(\exactg(\boldsymbol{x}))
-\exacth(\exactg(\boldsymbol{x}))\|_2.
\end{aligned}
\end{equation}

These 3 terms $A$, $B$ and $C$ are bounded, respectively as follows: 
\begin{equation}
\begin{aligned}
A=&\|\preh({\bf\hat U}{\bf\hat U}^\top\preg(\boldsymbol{x}))
-\preh({\bf\hat U}{\bf\hat U}^\top\exactg(\boldsymbol{x}))\|_2\\
\leq&L_{\rm pre}\cdot\|{\bf\hat U}{\bf\hat U}^\top(\preg(\boldsymbol{x})-\exactg(\boldsymbol{x}))\|_2\quad(\text{Assumption}\ \text{\ref{asm:2-lips}})\\
\leq&L_{\rm pre}\cdot\|{\bf\hat U}{\bf\hat U}^\top\|_2\cdot\|\preg(\boldsymbol{x})-\exactg(\boldsymbol{x})\|_2\\
\leq&L_{\rm pre}\cdot\epsilon_d,\quad(\text{Assumption}\ \text{\ref{asm:1-bound}})
\end{aligned}
\end{equation}
\begin{equation}
\begin{aligned}
B=&\|\preh({\bf\hat U}{\bf\hat U}^\top\exactg(\boldsymbol{x}))
-\preh(\exactg(\boldsymbol{x}))\|_2\\
\leq&L_{\rm pre}\cdot\|({\bf I}-{\bf\hat U}{\bf\hat U}^\top)\exactg(\boldsymbol{x})\|_2\quad(\text{Assumption}\ \text{\ref{asm:2-lips}})\\
\leq&L_{\rm pre}\cdot(\|({\bf I}-{\bf U}_*{\bf U}_*^\top)\exactg(\boldsymbol{x})\|_2
+\|({\bf U}_*{\bf U}_*^\top-{\bf\hat U}{\bf\hat U}^\top)\exactg(\boldsymbol{x})\|_2)\\
\leq&L_{\rm pre}\cdot(\epsilon_{\rm rm}+\|{\bf U}_*{\bf U}_*^\top-{\bf\hat U}{\bf\hat U}^\top\|_F\cdot\|\exactg(\boldsymbol{x})\|_2)\quad(\text{Lemma}\ \text{\ref{lm:sep-subspace}})\\
\leq&L_{\rm pre}\cdot(\epsilon_{\rm rm}+{\cal O}(\epsilon_{\rm opt})),
\end{aligned}
\end{equation}
where $\epsilon_{\rm rm}$ is from the Lemma \ref{lm:sep-subspace},
and
$C=\|\preh(\exactg(\boldsymbol{x}))
-\exacth(\exactg(\boldsymbol{x}))\|_2
\leq\|{\bf W}_{\rm pre}-{\bf W}_{\rm exact}\|_2\cdot\|\exactg(\boldsymbol{x})\|_2
\leq\epsilon_W\cdot\epsilon_{\rm exact}$ according to Assumption \ref{asm:2-lips}.

Similarly, for any data $\boldsymbol{x}\in\fgdata$, we conduct decomposition on $\|f_{\bf\hat U}(\boldsymbol{x})-\exactf(\boldsymbol{x})\|_2$, such that
\begin{equation}
\begin{aligned}
\|f_{\bf\hat U}(\boldsymbol{x})-\exactf(\boldsymbol{x})\|_2
&\leq\|\preh({\bf\hat U}{\bf\hat U}^\top\preg(\boldsymbol{x}))
-\preh({\bf\hat U}{\bf\hat U}^\top\exactg(\boldsymbol{x}))\|_2\\
&+\|\preh({\bf\hat U}{\bf\hat U}^\top\exactg(\boldsymbol{x}))
-\preh({\boldsymbol{0}})\|_2\\
&+\|\preh({\boldsymbol{0}})
-\exacth(\exactg(\boldsymbol{x}))\|_2,
\end{aligned}
\end{equation}
where $\boldsymbol{0}$ denotes a $d$-dimensional  all-zero vector. We denote $B_1=\|\preh({\bf\hat U}{\bf\hat U}^\top\exactg(\boldsymbol{x}))
-\preh({\bf 0})\|_2$ and $C_1=\|\preh({\bf 0})
-\exacth(\exactg(\boldsymbol{x}))\|_2$, which are given by
\begin{equation}
\begin{aligned}
B_1=&\|\preh({\bf\hat U}{\bf\hat U}^\top\exactg(\boldsymbol{x}))
-\preh({\bf 0})\|_2\\
\leq&L_{\rm pre}\cdot\|{\bf\hat U}{\bf\hat U}^\top\exactg(\boldsymbol{x})\|_2\quad(\text{Assumption}\ \text{\ref{asm:2-lips}})\\
\leq&L_{\rm pre}\cdot(\|{\bf U}_*{\bf U}_*^\top\exactg(\boldsymbol{x})\|_2+\|({\bf\hat U}{\bf\hat U}^\top-{\bf U}_*{\bf U}_*^\top)\exactg(\boldsymbol{x})\|_2)\\
\leq&L_{\rm pre}\cdot(\epsilon_{\rm fg}+{\cal O}(\epsilon_{\rm opt})),
\end{aligned}
\end{equation}
where $\epsilon_{\rm fg}$ is from the Lemma \ref{lm:sep-subspace}, and $C_1=\|\preh({\bf 0})
-\exacth(\exactg(\boldsymbol{x}))\|_2\leq L_{\rm exact}\cdot\epsilon_{\rm exact}$ according to Assumption \ref{asm:1-bound} and Assumption \ref{asm:2-lips}.

Therefore, given $\boldsymbol{x}\in{\cal D}$, the difference in outputs between $f_{\bf\hat U}$ and $\exactf$ is bounded by $\|f_{\bf\hat U}(\boldsymbol{x})-\exactf(\boldsymbol{x})\|_2\leq L_{\rm pre}\cdot(\epsilon_d+\max{(\epsilon_{\rm fg},\epsilon_{\rm rm})}+{\cal O}(\epsilon_{\rm opt}))+\epsilon_{\rm exact}\cdot\max{(L_{\rm exact},\epsilon_W)}$.
\end{proof}

\section{Implementation Details of Experiments in Main Text}
\label{app:sec:settings}

\subsection{Training Details of Multi-Class Unlearning}
\label{sec:settings-class}

For multi-class unlearning in Table~\ref{tab:exp-timgnet-swint-class} with {Swin-T} on {Tiny-ImageNet}, we adopt the PyTorch-released checkpoint trained on ImageNet-1K as a starting point, and fine-tune this checkpoint on Tiny-ImageNet to obtain the pretrained model.
In the repeated 3 experiments of Table~\ref{tab:exp-timgnet-swint-class}, the randomly selected 4 forgetting labels (classes) are $\{11,83,115,153\}$, $\{44,65,150,168\}$ and $\{53,57,108,179\}$, respectively. All compared methods use the AdamW optimizer \cite{loshchilovdecoupled} with a weight decay of 0.05 and a batch size of 128.
In L2UL \cite{cha2024learning}, to generate adversarial examples, the $\ell_2$-PGD targeted attack is employed with a step size of 0.1, a perturbation bound of 0.4 and 100 iteration steps, and 200 adversarial examples are generated per image. 
The termination of the L2UL unlearning is carefully selected until the accuracy on the training forgetting data is sufficiently low without a substantial accuracy drop on the training remaining data.
All the training hyper-parameters to obtain results in Table~\ref{tab:exp-timgnet-swint-class} are listed in Table~\ref{tab:exp-timgnet-swint-class-hyperparam}.
Our \ours adopts the Riemannian Adam optimizer \cite{kochurov2020geoopt}  for the projection matrix $\mathbf{U}$ with a weight decay of 0.05, where the penultimate layer feature dimension with Swin-T on Tiny-ImageNet is $d=768$.

\begin{table*}[t]
    \centering
    \caption{Training hyper-parameters of different MU methods w.r.t. results of multi-class unlearning in Table~\ref{tab:exp-timgnet-swint-class} with Swin-T on Tiny-ImageNet.}
    \resizebox{0.85\textwidth}{!}{
    \begin{tabular}{c|cc|c}
    \toprule
    method & $\rmdata$ & $\fgdata$ & hyper-parameters\\
    \midrule
    pretrained & $\usym{1F5F8}$ & $\usym{1F5F8}$ & 20 epochs, lr $\rm ={10}^{-4}$, cosine scheduler \\
    retrained & $\usym{1F5F8}$ & $\usym{1F5F4}$ & 20 epochs, lr $\rm ={10}^{-4}$, cosine scheduler\\
    \midrule
    FT & $\usym{1F5F8}$ & $\usym{1F5F4}$ & 10 epochs, lr $\rm ={10}^{-4}$, cosine scheduler\\
    GA & $\usym{1F5F4}$ & $\usym{1F5F8}$ & 10 epochs, lr $\rm =2\times{10}^{-6}$, constant scheduler\\
    RL & $\usym{1F5F4}$ & $\usym{1F5F8}$ & 10 epochs, lr $\rm ={10}^{-5}$, cosine scheduler\\
    RL & $\usym{1F5F8}$ & $\usym{1F5F8}$ & 10 epochs, lr $\rm ={10}^{-4}$, cosine scheduler\\
    SalUn & $\usym{1F5F4}$ & $\usym{1F5F8}$ & 10 epochs, lr $\rm ={10}^{-5}$, cosine scheduler, saliency sparsity 50\%\\
    SalUn & $\usym{1F5F8}$ & $\usym{1F5F8}$ & 10 epochs, lr $\rm ={10}^{-4}$, cosine scheduler, saliency sparsity 50\%\\
    BT & $\usym{1F5F4}$ & $\usym{1F5F8}$ & 10 epochs, lr $\rm ={10}^{-5}$, cosine scheduler, temperature scalar = 1.0\\
    L2UL & $\usym{1F5F4}$ & $\usym{1F5F8}$ & lr $\rm ={10}^{-5}$, constant scheduler, regularization coefficient = 1.0\\
    COUN+RL & $\usym{1F5F4}$ & $\usym{1F5F8}$ & 10 epochs, lr $\rm =2\times{10}^{-5}$, cosine scheduler \\
    DELETE & $\usym{1F5F4}$ & $\usym{1F5F8}$ & 10 epochs, lr $\rm ={10}^{-5}$, cosine scheduler\\
    \midrule
    {\bf\ours} & only $\presigmarm$ & only $\presigmafg$ & 50 steps, $s=250$, lr $\rm = 1$, constant scheduler\\
    \bottomrule
    
    \end{tabular}}
    \label{tab:exp-timgnet-swint-class-hyperparam}
\end{table*}

\subsection{Training Details of Single-Class Unlearning}
For single-class unlearning in Table~\ref{tab:exp-imgnet-resnet50} with {ResNet50} on {ImageNet-1K}, we deploy the PyTorch-released checkpoint that is exactly pretrained on ImageNet-1K as the pretrained model $\pref$.
Regarding the retrained model $\exactf$ for unlearning, we train ResNet50 from scratch on the remaining data $\rmdata$ for 90 epochs on 4 NVIDIA GeForce RTX 4090 GPUs, which runs for around 24 hours.
In the repeated 3 experiments of Table~\ref{tab:exp-imgnet-resnet50}, the randomly selected forgetting labels are 97, 316, and 852, respectively.
All the compared methods use the SGD optimizer \cite{bottou2012stochastic} with a weight decay of $1\times10^{-4}$, the momentum of 0.9, and a batch size of 128.
All the training hyper-parameters for the results in Table~\ref{tab:exp-imgnet-resnet50} are listed  Table~\ref{tab:exp-imgnet-resnet50-hyperparam}.
Our \ours keeps the settings of using the Riemannian Adam optimizer \cite{kochurov2020geoopt} for $\mathbf{U}$ with a weight decay of 0.05, where the penultimate layer feature dimension with ResNet50 on ImageNet-1K is $d=2048$.

\begin{table*}[t]
    \centering
    \caption{Training hyper-parameters of different MU methods w.r.t. results of single-class unlearning in Table~\ref{tab:exp-imgnet-resnet50} with ResNet50 on ImageNet-1K.}
    \resizebox{0.85\textwidth}{!}{
    \begin{tabular}{c|cc|c}
    \toprule
    method & $\rmdata$ & $\fgdata$ & hyper-parameters\\
    \midrule
    pretrained & $\usym{1F5F8}$ & $\usym{1F5F8}$ & - \\
    retrained & $\usym{1F5F8}$ & $\usym{1F5F4}$ & 90 epochs, lr $\rm ={10}^{-1}$, LR decay: 0.1 every 30 epochs\\
    \midrule
    GA & $\usym{1F5F4}$ & $\usym{1F5F8}$ & 3 epochs, lr $\rm ={10}^{-4}$, constant scheduler\\
    RL & $\usym{1F5F4}$ & $\usym{1F5F8}$ & 5 epochs, lr $\rm =5\times{10}^{-5}$, cosine scheduler\\
    SalUn & $\usym{1F5F4}$ & $\usym{1F5F8}$ & 10 epochs, lr $\rm ={10}^{-4}$, cosine scheduler, saliency sparsity 50\%\\
    BT & $\usym{1F5F4}$ & $\usym{1F5F8}$ & 5 epochs, lr $\rm =1\times{10}^{-5}$, cosine scheduler, temperature scalar = 1.0\\
    COUN+RL & $\usym{1F5F4}$ & $\usym{1F5F8}$ & 3 epochs, lr $\rm ={10}^{-6}$, cosine scheduler \\
    DELETE & $\usym{1F5F4}$ & $\usym{1F5F8}$ & 5 epochs, lr $\rm ={10}^{-1}$, cosine scheduler\\
    \midrule
    {\bf\ours} & only $\presigmarm$ & only $\presigmafg$ & 100 steps, $s=500$, lr $\rm = 10$, constant scheduler\\
    \bottomrule
    
    \end{tabular}}
    \label{tab:exp-imgnet-resnet50-hyperparam}
\end{table*}

\begin{table*}[t]
    \centering
    \caption{Training hyper-parameters of different MU methods w.r.t. results of instance unlearning in Table~\ref{tab:exp-timgnet-swint-instance} with Swin-T on Tiny-ImageNet.}
    \resizebox{0.85\textwidth}{!}{
    \begin{tabular}{c|cc|c}
    \toprule
    method & $\rmdata$ & $\fgdata$ & hyper-parameters\\
    \midrule
    pretrained & $\usym{1F5F8}$ & $\usym{1F5F8}$ & 20 epochs, lr $\rm ={10}^{-4}$, cosine scheduler \\
    retrained & $\usym{1F5F8}$ & $\usym{1F5F4}$ & 20 epochs, lr $\rm ={10}^{-4}$, cosine scheduler\\
    \midrule
    GA & $\usym{1F5F4}$ & $\usym{1F5F8}$ & 10 epochs, lr $\rm =6\times{10}^{-6}$ constant scheduler\\
    RL & $\usym{1F5F4}$ & $\usym{1F5F8}$ & 10 epochs, lr $\rm ={10}^{-5}$, cosine scheduler\\
    SalUn & $\usym{1F5F4}$ & $\usym{1F5F8}$ & 10 epochs, lr $\rm ={10}^{-5}$, cosine scheduler, saliency sparsity 50\%\\
    BT & $\usym{1F5F4}$ & $\usym{1F5F8}$ & 10 epochs, lr $\rm ={10}^{-5}$, cosine scheduler, temperature scalar = 1.0\\
    L2UL & $\usym{1F5F4}$ & $\usym{1F5F8}$ & lr $\rm ={10}^{-4}$, constant scheduler, regularization coefficient = 1.0\\
    COUN+RL & $\usym{1F5F4}$ & $\usym{1F5F8}$ & 10 epochs, lr $\rm ={10}^{-5}$, cosine scheduler \\
    DELETE & $\usym{1F5F4}$ & $\usym{1F5F8}$ & 10 epochs, lr $\rm =2\times{10}^{-5}$, cosine scheduler\\
    \midrule
    {\bf\ours} & only $\presigmarm$ & only $\presigmafg$ & 100 steps, $s=300$, lr $\rm = 1$, constant scheduler\\
    \bottomrule
    
    \end{tabular}}
    \label{tab:exp-timgnet-swint-instance-hyperparam}
\end{table*}

\subsection{Training Details of Instance Unlearning}
\label{sec:settings-instace}

For instance unlearning in Table~\ref{tab:exp-timgnet-swint-instance} with {Swin-T} on {Tiny-ImageNet}, the basic settings are with that in Sec.~\ref{sec:settings-class}.
1,000 samples from the total 100,000 training samples are randomly selected as the forgetting data $\fgdata$ (1\%).
Each of the repeated 3 experiments of Table~\ref{tab:exp-timgnet-swint-instance} is corresponding to different random seeds, i.e., 0, 1 and 2.
All compared methods use the AdamW optimizer \cite{loshchilovdecoupled} with a weight decay of 0.05 and a batch size of 128.
The hyper-parameters are listed in Table~\ref{tab:exp-timgnet-swint-instance-hyperparam}.
Our \ours uses the Riemannian Adam optimizer \cite{kochurov2020geoopt} with a weight decay of 0.05.

\subsection{Implementation Positions for Subspace Learning}
\label{sec:appendix:pos}

As discussed in Sec.\ref{sec:exp:ablation-sensitivity} of the main text, \ours is implemented in different layers with $\pref$ exemplified on ResNet18.
Specifically, ResNet18 is structured with a preceding module followed by 4 {\tt layer} modules ({\tt layer1}, {\tt layer2}, {\tt layer3} and {\tt layer4}) and a linear layer module\footnote{https://docs.pytorch.org/vision/main/\_modules/torchvision/models/resnet.html\#resnet18}.
By default, \ours is applied to the penultimate-layer features, i.e., features after the {\tt layer4} module.
In experiments, we further apply \ours to earlier {\tt layer} modules ({\tt layer1}, {\tt layer2} and {\tt layer3}) to investigate the unlearning effectiveness of subspace learning at such positions, with results shown in the right panel of Fig.\ref{fig:sensitivity} with ResNet18 on CIFAR10.

\begin{wraptable}[10]{r}{0.4\textwidth}
\vspace{-\baselineskip}
\centering
\caption{Hyper-parameters in different positions for \ours.}
    \resizebox{0.3\textwidth}{!}{
\begin{tabular}{c|ccc}
\toprule
position & $d$ & $s$ & steps \\
\midrule
{\tt layer1} & 4,096 & 500 & 500 \\
{\tt layer2} & 2,048 & 350 & 200\\
{\tt layer3} & 1,024 & 9 & 200 \\
{\tt layer4} & 512   & 9 & 50 \\
\bottomrule
\end{tabular}
}
\label{tab:sensitivity-pos}
\end{wraptable}
As shown in Fig.\ref{fig:sensitivity}, our \ours can be flexibly implemented in different positions of the pretrained model with competitive unlearning performance achieved. 
Besides, it also implies that the feature covariance matrices in earlier layers contain sufficient information to achieve MU.
Moreover, we provide in Table \ref{tab:sensitivity-pos} the corresponding feature dimension $d$, subspace dimension $s$ and the optimization steps when applying \ours after different {\tt layer} modules, which is in relation to the sensitivity analysis results of the right panel in Fig.\ref{fig:sensitivity}.
It indicates the additional computation overhead when applying \ours to earlier {\tt layer} modules.

\section{Extreme Unlearning and Continual Unlearning}
\label{app:sec:exp-extreme-continual}

\subsection{Extreme Unlearning}
\paragraph{Settings.}
We conduct evaluations to investigate the robustness of MU methods under an extreme scenario, where a very high ratio (over 90\%) of the training samples is required to be removed.
This situation may also arise in practice, such as deleting data across a large number of users on one occasion.
In this experiment, 180 of 200 classes (90\%) in Tiny-ImageNet are randomly selected as $\fgdata$, and experiments are repeated 3 times.
We adopt different random seeds, i.e., 0, 1 and 2, to guarantee that the forgetting 180 labels are different in each experiment.
All compared methods use the AdamW optimizer \cite{loshchilovdecoupled} with weight decay of 0.05 and a batch size of 128.
The hyper-parameters are listed in Table~\ref{tab:exp-timgnet-swint-extreme-class-hyperparam}.
In Table~\ref{tab:exp-timgnet-swint-extreme-class-hyperparam}, the $\fgdata$-based GA and SalUn involving both $\rmdata$ and $\fgdata$ are highly-sensitive to the choice of the learning rate under this extreme unlearning scenario.
To obtain satisfactory results, we use different learning rates for GA and SalUn in each experiment w.r.t. different random seeds.
For GA, the hyper-parameters are \{4 epochs, lr $\rm =3\times{10}^{-7}$\}, \{7 epochs, lr $\rm =2\times{10}^{-7}$\} and \{3 epochs, lr $\rm =4\times{10}^{-7}$\} w.r.t. random seeds 0, 1 and 2.
For SalUn based on $\rmdata$ and $\fgdata$, the hyper-parameters are \{lr $\rm =2\times{10}^{-4}$\}, \{lr $\rm =1.5\times{10}^{-4}$\} and \{lr $\rm ={10}^{-4}$\} w.r.t. random seeds 0, 1 and 2.
For L2UL, we only generate two adversarial examples per image for efficiency.
Our \ours optimizes $\mathbf{U}$  through the Riemannian Adam optimizer \cite{kochurov2020geoopt} with the weight decay of 0.05.

\paragraph{Results.}
Table~\ref{tab:exp-timgnet-swint-extreme-class} demonstrates results of extreme unlearning with Swin-T on Tiny-ImageNet.
In this extreme unlearning scenario, methods that directly update $\pref$ struggle to maintain good performances on $\rmdata$, as the gradient updates to $\pref$ are dominated by penalizing performance on $\fgdata$ with a substantial number of samples. 
In contrast, our \ours does not modify $\pref$ and its projection matrix is optimized on the feature covariances w.r.t.  $\rmdata$ and $\fgdata$, which is less affected by the relative data sizes of $\rmdata$ and $\fgdata$.
\ours achieves the best average gap (Avg.G.) of 0.42 with about 0.05\% parameters w.r.t. the given model, and takes the shortest running time (RTE) about 0.29 seconds.

\begin{table*}[t]
    \centering
    \caption{Training hyper-parameters of different MU methods under extreme unlearning with Swin-T on Tiny-ImageNet.}
    \resizebox{0.85\textwidth}{!}{
    \begin{tabular}{c|cc|c}
    \toprule
    method & $\rmdata$ & $\fgdata$ & hyper-parameters\\
    \midrule
    pretrained & $\usym{1F5F8}$ & $\usym{1F5F8}$ & 20 epochs, lr $\rm ={10}^{-4}$, cosine scheduler \\
    retrained & $\usym{1F5F8}$ & $\usym{1F5F4}$ & 20 epochs, lr $\rm ={10}^{-4}$, cosine scheduler\\
    \midrule
    FT & $\usym{1F5F8}$ & $\usym{1F5F4}$ & 10 epochs, lr $\rm ={10}^{-4}$, cosine scheduler\\
    GA & $\usym{1F5F4}$ & $\usym{1F5F8}$ & constant scheduler\\
    RL & $\usym{1F5F4}$ & $\usym{1F5F8}$ & 5 epochs, lr $\rm =5\times{10}^{-7}$, cosine scheduler\\
    RL & $\usym{1F5F8}$ & $\usym{1F5F8}$ & 10 epochs, lr $\rm =2\times{10}^{-5}$, cosine scheduler\\
    SalUn & $\usym{1F5F4}$ & $\usym{1F5F8}$ & 5 epochs, lr $\rm =5\times{10}^{-7}$, cosine scheduler, saliency sparsity 50\%\\
    SalUn & $\usym{1F5F8}$ & $\usym{1F5F8}$ & 10 epochs, cosine scheduler, saliency sparsity 50\%\\
    BT & $\usym{1F5F4}$ & $\usym{1F5F8}$ & 2 epochs, lr $\rm ={10}^{-6}$, cosine scheduler, temperature scalar = 1.0\\
    L2UL & $\usym{1F5F4}$ & $\usym{1F5F8}$ & lr $\rm =2\times{10}^{-4}$, constant scheduler, regularization coefficient = 1.0\\
    COUN+RL & $\usym{1F5F4}$ & $\usym{1F5F8}$ & 5 epochs, lr $\rm ={10}^{-4}$, cosine scheduler \\
    DELETE & $\usym{1F5F4}$ & $\usym{1F5F8}$ & 10 epochs, lr $\rm =2\times{10}^{-4}$, cosine scheduler\\
    \midrule
    {\bf\ours} & only $\presigmarm$ & only $\presigmafg$ & 50 steps, $s=18$, lr $\rm = 1$, constant scheduler\\
    \bottomrule
    
    \end{tabular}}
    \label{tab:exp-timgnet-swint-extreme-class-hyperparam}
\end{table*}

\begin{table*}[t]
    \centering
    \caption{Comparison on performance under extreme unlearning with {Swin-T} on {Tiny-ImageNet}.}
    \resizebox{0.95\textwidth}{!}{
    \begin{tabular}{c|cc|cc cc|c|ccc}
    \toprule
    method & $\rmdata$ & $\fgdata$ & $\rm\bf Acc_{rm}^{tr}$ & $\rm\bf Acc_{fg}^{tr}$ & $\rm\bf Acc_{rm}^{te}$ & $\rm\bf Acc_{fg}^{te}$ & {\bf MIA} & Avg.G.$\downarrow$ & RTE $\downarrow$ & \# param. (\%) \\
    \midrule
    pretrained & $\usym{1F5F8}$ & $\usym{1F5F8}$ & $99.64_{\pm0.02}$ & $99.63_{\pm0.00}$ & $75.43_{\pm2.61}$ & $74.45_{\pm0.29}$ & $95.34_{\pm0.43}$ & - &  882.05 & 27,673,154 (100\%)\\
    
    retrained & $\usym{1F5F8}$ & $\usym{1F5F4}$ & $99.97_{\pm0.01}$ (\textcolor{blue}{0.00}) & $0.00_{\pm0.00}$ (\textcolor{blue}{0.00}) & $90.57_{\pm0.75}$ (\textcolor{blue}{0.00}) & $0.00_{\pm0.00}$ (\textcolor{blue}{0.00}) & $0.00_{\pm0.00}$ (\textcolor{blue}{0.00}) & \textcolor{blue}{0.00} & 95.17 & 27,673,154 (100\%)\\
    \midrule
    GA & $\usym{1F5F4}$ & $\usym{1F5F8}$ & $14.26_{\pm6.06}$ (\textcolor{blue}{85.71}) & $9.61_{\pm4.36}$ (\textcolor{blue}{9.61}) & $12.20_{\pm4.61}$ (\textcolor{blue}{78.37}) & $8.10_{\pm3.46}$ (\textcolor{blue}{8.10}) & $11.41_{\pm5.19}$ (\textcolor{blue}{11.41}) & \textcolor{blue}{38.64} & 178.07 & 27,673,154 (100\%)\\

    \textcolor{gray}{FT} & \textcolor{gray}{$\usym{1F5F8}$} & \textcolor{gray}{$\usym{1F5F4}$} & \textcolor{gray}{$100.00_{\pm0.01}$ (\textcolor{grayblue}{0.03})} & \textcolor{gray}{$69.98_{\pm1.24}$ (\textcolor{grayblue}{69.98})} & \textcolor{gray}{$91.13_{\pm0.86}$ (\textcolor{grayblue}{0.56})} & \textcolor{gray}{$49.98_{\pm1.03}$ (\textcolor{grayblue}{49.98})} & \textcolor{gray}{$38.82_{\pm0.58}$ (\textcolor{grayblue}{38.82})} & \textcolor{grayblue}{31.88} & \textcolor{gray}{47.42} & \textcolor{gray}{27,673,154} (\textcolor{gray}{100\%})\\

    RL & $\usym{1F5F4}$ & $\usym{1F5F8}$ & $26.98_{\pm3.11}$ (\textcolor{blue}{72.99}) & $16.33_{\pm2.69}$ (\textcolor{blue}{16.33}) & $23.43_{\pm2.05}$ (\textcolor{blue}{67.14}) & $13.75_{\pm2.35}$ (\textcolor{blue}{13.75}) & $5.39_{\pm0.75}$ (\textcolor{blue}{5.39}) & \textcolor{blue}{35.12} & 195.26 & 27,673,154 (100\%)\\

    \textcolor{gray}{RL} & \textcolor{gray}{$\usym{1F5F8}$} & \textcolor{gray}{$\usym{1F5F8}$} & \textcolor{gray}{$98.77_{\pm0.22}$ (\textcolor{grayblue}{1.20})} & \textcolor{gray}{$20.40_{\pm0.83}$ (\textcolor{grayblue}{20.40})} & \textcolor{gray}{$90.03_{\pm0.58}$ (\textcolor{grayblue}{0.54})} & \textcolor{gray}{$15.38_{\pm0.47}$ (\textcolor{grayblue}{15.38})}  & \textcolor{gray}{$1.10_{\pm0.51}$ (\textcolor{grayblue}{1.10})} & \textcolor{grayblue}{7.72} & \textcolor{gray}{431.45} & \textcolor{gray}{27,673,154} (\textcolor{gray}{100\%})\\

    SalUn & $\usym{1F5F4}$ & $\usym{1F5F8}$ & $27.39_{\pm0.48}$ (\textcolor{blue}{72.58}) & $17.57_{\pm3.28}$ (\textcolor{blue}{17.57}) & $23.30_{\pm2.1}$ (\textcolor{blue}{67.27}) & $15.05_{\pm2.65}$ (\textcolor{blue}{15.05}) & $12.25_{\pm1.57}$ (\textcolor{blue}{12.25}) & \textcolor{blue}{36.95} & 235.23 & 13,836,577 (50\%)\\

    \textcolor{gray}{SalUn} & \textcolor{gray}{$\usym{1F5F8}$} & \textcolor{gray}{$\usym{1F5F8}$} & \textcolor{gray}{$98.72_{\pm0.37}$ (\textcolor{grayblue}{1.25})} & \textcolor{gray}{$13.88_{\pm3.09}$ (\textcolor{grayblue}{13.88})} & \textcolor{gray}{$87.93_{\pm1.33}$ (\textcolor{grayblue}{2.64})} & \textcolor{gray}{$11.61_{\pm2.29}$ (\textcolor{grayblue}{11.61})} & \textcolor{gray}{$2.74_{\pm1.14}$ (\textcolor{grayblue}{2.14})} & \textcolor{grayblue}{6.31} & \textcolor{gray}{476.78} & \textcolor{gray}{13,836,577} (\textcolor{gray}{50\%})\\

    BT & $\usym{1F5F4}$ & $\usym{1F5F8}$ & $24.77_{\pm1.45}$ (\textcolor{blue}{75.20}) & $14.66_{\pm1.40}$ (\textcolor{blue}{14.66}) & $21.30_{\pm1.32}$ (\textcolor{blue}{69.27}) & $12.57_{\pm1.30}$ (\textcolor{blue}{12.57}) &  $4.09_{\pm0.44}$ (\textcolor{blue}{4.09}) & \textcolor{blue}{35.16} & 88.81 & 27,673,154 (100\%)\\
    
    L2UL & $\usym{1F5F4}$ & $\usym{1F5F8}$ & $23.90_{\pm7.94}$ (\textcolor{blue}{76.07}) & $15.00_{\pm3.50}$ (\textcolor{blue}{15.00}) & $19.10_{\pm6.35}$ (\textcolor{blue}{71.47}) & $11.91_{\pm2.50}$ (\textcolor{blue}{11.91}) & $3.66_{\pm1.95}$ (\textcolor{blue}{3.66})  & \textcolor{blue}{35.63} & 7361.76 & 27,673,154 (100\%)\\

    COUN+RL & $\usym{1F5F4}$ & $\usym{1F5F8}$ & $35.77_{\pm3.93}$ (\textcolor{blue}{64.20}) & $27.33_{\pm0.42}$ (\textcolor{blue}{27.33}) & $33.10_{\pm3.50}$ (\textcolor{blue}{57.47}) & $24.07_{\pm0.25}$ (\textcolor{blue}{24.07}) & $3.28_{\pm3.46}$ (\textcolor{blue}{3.28}) & \textcolor{blue}{35.27} & 337.69 & 27,673,154 (100\%)\\ 
    
    DELETE & $\usym{1F5F4}$ & $\usym{1F5F8}$ & $97.37_{\pm0.36}$ (\textcolor{blue}{2.60}) & $6.95_{\pm0.31}$ (\textcolor{blue}{6.95}) & $81.07_{\pm1.53}$ (\textcolor{blue}{9.50}) & $10.96_{\pm0.48}$ (\textcolor{blue}{10.96}) & $1.16_{\pm0.16}$ (\textcolor{blue}{1.16}) & \textcolor{blue}{6.23} & 446.62 & 27,673,154 (100\%)\\
    \midrule
    {\bf\ours} & only $\presigmarm$ & only $\presigmafg$ & $99.63_{\pm0.18}$ (\textcolor{blue}{\bf0.34}) & $0.03_{\pm0.03}$ (\textcolor{blue}{\bf0.03}) & $92.17_{\pm0.65}$ (\textcolor{blue}{\bf1.60}) & $0.02_{\pm0.02}$ (\textcolor{blue}{\bf0.02}) & $0.08_{\pm0.12}$ (\textcolor{blue}{\bf0.08}) & \textcolor{blue}{\bf0.42} & {\bf0.29} & {\bf13,824} ({\bf0.05\%})\\
    \bottomrule
    \end{tabular}}
    \label{tab:exp-timgnet-swint-extreme-class}
\end{table*}

\begin{table*}[t]
    \centering
    \caption{Comparison results of Swin-T on Tiny-ImageNet under continual unlearning.}
    \resizebox{0.85\textwidth}{!}{
    \begin{tabular}{c|ccc|ccc|ccc}
    \toprule
    method & $\rmdata$ & $\fgdata$ & $\fgpdata$ & $\rm\bf Acc_{rm}^{tr}$ & $\rm\bf Acc_{fg}^{tr}$ & $\rm\bf Acc_{fgp}^{tr}$ & $\rm\bf Acc_{rm}^{te}$ & $\rm\bf Acc_{fg}^{te}$ & $\rm\bf Acc_{fgp}^{te}$ \\
    \midrule
    
    \multicolumn{10}{c}{\it Round-1}\\
    pretrained & $\usym{1F5F8}$ & $\usym{1F5F8}$ & - & 99.63 & 99.90 & - & 74.49 & 80.00 & - \\
    retrained & $\usym{1F5F8}$ & $\usym{1F5F4}$ & - & 99.65 (\textcolor{blue}{0.00}) & 0.00 (\textcolor{blue}{0.00}) & - & 75.08 (\textcolor{blue}{0.00}) & 0.00 (\textcolor{blue}{0.00}) & - \\

    RL & $\usym{1F5F4}$ & $\usym{1F5F8}$ & - & 97.00 (\textcolor{blue}{2.65}) & 8.50 (\textcolor{blue}{8.50}) & - & 70.87 (\textcolor{blue}{4.21}) & 1.00 (\textcolor{blue}{1.00}) & - \\

    \textcolor{gray}{RL} & \textcolor{gray}{$\usym{1F5F8}$} & \textcolor{gray}{$\usym{1F5F8}$} & - & \textcolor{gray}{99.93} (\textcolor{grayblue}{0.28}) & \textcolor{gray}{4.90} (\textcolor{grayblue}{4.90}) & \textcolor{gray}{-} & \textcolor{gray}{74.34} (\textcolor{grayblue}{0.74}) & \textcolor{gray}{2.00} (\textcolor{grayblue}{2.00}) & \textcolor{gray}{-} \\

    BT & $\usym{1F5F4}$ & $\usym{1F5F8}$ & - & 98.57 (\textcolor{blue}{1.08}) & 21.60 (\textcolor{blue}{21.60}) & - & 72.58 (\textcolor{blue}{2.50}) & 6.00 (\textcolor{blue}{6.00}) & - \\
     
    DELETE & $\usym{1F5F4}$ & $\usym{1F5F8}$ & - & 99.21 (\textcolor{blue}{0.44}) & 8.00 (\textcolor{blue}{8.00}) & - & 73.87 (\textcolor{blue}{0.62}) & 0.00 (\textcolor{blue}{0.00}) & - \\
    
    {\bf \ours} & $\presigmarm$ & $\presigmafg$ & - & 96.86 (\textcolor{blue}{2.77}) & 4.20 (\textcolor{blue}{4.20}) & - & 72.65 (\textcolor{blue}{2.43}) & 0.00 (\textcolor{blue}{0.00}) & - \\

    \midrule
    \multicolumn{10}{c}{\it Round-2}\\
    pretrained & $\usym{1F5F8}$ & $\usym{1F5F8}$ & $\usym{1F5F8}$ & 99.63 & 99.60 & 99.90 & 74.55 & 69.00 & 80.00 \\
    retrained & $\usym{1F5F8}$ & $\usym{1F5F4}$ & $\usym{1F5F4}$ & 99.69 (\textcolor{blue}{0.00}) & 0.00 (\textcolor{blue}{0.00}) & 0.00 (\textcolor{blue}{0.00}) & 75.57 (\textcolor{blue}{0.00}) & 0.00 (\textcolor{blue}{0.00}) & 0.00 (\textcolor{blue}{0.00}) \\

    RL & $\usym{1F5F4}$ & $\usym{1F5F8}$ & $\usym{1F5F4}$ & 80.30 (\textcolor{blue}{19.39}) & 6.30 (\textcolor{blue}{6.30}) & 1.00 (\textcolor{blue}{1.00}) & 59.68 (\textcolor{blue}{15.89}) & 2.00 (\textcolor{blue}{2.00}) & 0.00 (\textcolor{blue}{0.00}) \\

    \textcolor{gray}{RL} & \textcolor{gray}{$\usym{1F5F8}$} & \textcolor{gray}{$\usym{1F5F8}$} & \textcolor{gray}{$\usym{1F5F4}$} & \textcolor{gray}{99.98} (\textcolor{grayblue}{0.29}) & \textcolor{gray}{6.30} (\textcolor{grayblue}{6.30}) & \textcolor{gray}{1.00} (\textcolor{grayblue}{1.00}) & \textcolor{gray}{74.37} (\textcolor{grayblue}{1.20}) & \textcolor{gray}{1.00} (\textcolor{grayblue}{1.00}) & \textcolor{gray}{0.00} (\textcolor{grayblue}{0.00}) \\

    BT & $\usym{1F5F4}$ & $\usym{1F5F8}$ & $\usym{1F5F4}$ & 91.42 (\textcolor{blue}{8.21}) & 20.30 (\textcolor{blue}{20.30}) & 6.60 (\textcolor{blue}{6.60}) & 67.07 (\textcolor{blue}{8.50}) & 8.00 (\textcolor{blue}{8.00}) & 3.00 (\textcolor{blue}{3.00}) \\
     
    DELETE & $\usym{1F5F4}$ & $\usym{1F5F8}$ & $\usym{1F5F4}$ & 98.33 (\textcolor{blue}{1.30}) & 10.60 (\textcolor{blue}{10.60}) & 3.90 (\textcolor{blue}{3.90}) & 72.40 (\textcolor{blue}{3.17}) & 5.00 (\textcolor{blue}{5.00}) & 0.00 (\textcolor{blue}{0.00}) \\
    
    {\bf \ours} & $\presigmarm$ & $\presigmafg$ & $\presigmafgp$ & 95.77 (\textcolor{blue}{3.92}) & 7.50 (\textcolor{blue}{7.50}) & 0.50 (\textcolor{blue}{0.50}) & 72.01 (\textcolor{blue}{3.56}) & 4.00 (\textcolor{blue}{4.00}) & 1.00 (\textcolor{blue}{1.00}) \\

    \midrule
    \multicolumn{10}{c}{\it Round-3}\\
    pretrained & $\usym{1F5F8}$ & $\usym{1F5F8}$ & $\usym{1F5F8}$ & 99.62 & 99.60 & 99.75 & 74.53 & 69.00 & 74.50 \\
    retrained & $\usym{1F5F8}$ & $\usym{1F5F4}$ & $\usym{1F5F4}$ & 99.66 (\textcolor{blue}{0.00}) & 0.00 (\textcolor{blue}{0.00}) & 0.00 (\textcolor{blue}{0.00}) & 73.06 (\textcolor{blue}{0.00}) & 0.00 (\textcolor{blue}{0.00}) & 0.00 (\textcolor{blue}{0.00}) \\

    RL & $\usym{1F5F4}$ & $\usym{1F5F8}$ & $\usym{1F5F4}$ & 45.16 (\textcolor{blue}{54.50}) & 0.70 (\textcolor{blue}{0.70}) & 0.30 (\textcolor{blue}{0.30}) & 36.27 (\textcolor{blue}{36.79}) & 0.00 (\textcolor{blue}{0.00}) & 0.50 (\textcolor{blue}{0.50}) \\

    \textcolor{gray}{RL} & \textcolor{gray}{$\usym{1F5F8}$} & \textcolor{gray}{$\usym{1F5F8}$} & \textcolor{gray}{$\usym{1F5F4}$} & \textcolor{gray}{99.98} (\textcolor{grayblue}{0.32}) & \textcolor{gray}{7.70} (\textcolor{grayblue}{7.70}) & \textcolor{gray}{0.30} (\textcolor{grayblue}{0.30}) & \textcolor{gray}{73.79} (\textcolor{blue}{0.73}) & \textcolor{gray}{2.00} (\textcolor{blue}{2.00}) & \textcolor{gray}{0.00} (\textcolor{blue}{0.00}) \\

    BT & $\usym{1F5F4}$ & $\usym{1F5F8}$ & $\usym{1F5F4}$ & 72.06 (\textcolor{blue}{27.60}) & 6.40 (\textcolor{blue}{6.40}) & 4.45 (\textcolor{blue}{4.45}) & 54.42 (\textcolor{blue}{18.64}) & 1.00 (\textcolor{blue}{1.00}) & 2.50 (\textcolor{blue}{2.50}) \\
     
    DELETE & $\usym{1F5F4}$ & $\usym{1F5F8}$ & $\usym{1F5F4}$ & 96.27 (\textcolor{blue}{3.39}) & 5.10 (\textcolor{blue}{5.10}) & 3.85 (\textcolor{blue}{3.85}) & 70.05 (\textcolor{blue}{3.01}) & 3.00 (\textcolor{blue}{3.00}) & 1.50 (\textcolor{blue}{1.50}) \\
    
    {\bf \ours} & $\presigmarm$ & $\presigmafg$ & $\presigmafgp$ & 96.39 (\textcolor{blue}{3.27}) & 1.70 (\textcolor{blue}{1.70}) & 0.35 (\textcolor{blue}{0.35}) & 72.80 (\textcolor{blue}{0.26}) & 0.00 (\textcolor{blue}{0.00}) & 0.50 (\textcolor{blue}{0.50}) \\

    \midrule
    \multicolumn{10}{c}{\it Round-4}\\
    pretrained & $\usym{1F5F8}$ & $\usym{1F5F8}$ & $\usym{1F5F8}$ & 99.63 & 99.30 & 99.80 & 74.54 & 73.00 & 75.33 \\
    retrained & $\usym{1F5F8}$ & $\usym{1F5F4}$ & $\usym{1F5F4}$ & 99.67 (\textcolor{blue}{0.00}) & 0.00 (\textcolor{blue}{0.00}) & 0.00 (\textcolor{blue}{0.00}) & 75.72 (\textcolor{blue}{0.00}) & 0.00 (\textcolor{blue}{0.00}) & 0.00 (\textcolor{blue}{0.00}) \\

    RL & $\usym{1F5F4}$ & $\usym{1F5F8}$ & $\usym{1F5F4}$ & 10.96 (\textcolor{blue}{88.71}) & 0.30 (\textcolor{blue}{0.30}) & 0.00 (\textcolor{blue}{0.00}) & 9.97 (\textcolor{blue}{65.75}) & 1.00 (\textcolor{blue}{1.00}) & 0.00 (\textcolor{blue}{0.00}) \\

    \textcolor{gray}{RL} & \textcolor{gray}{$\usym{1F5F8}$} & \textcolor{gray}{$\usym{1F5F8}$} & \textcolor{gray}{$\usym{1F5F4}$} & \textcolor{gray}{99.98} (\textcolor{grayblue}{0.31}) & \textcolor{gray}{17.60} (\textcolor{grayblue}{17.60}) & \textcolor{gray}{1.07} (\textcolor{grayblue}{1.07}) & \textcolor{gray}{73.43} (\textcolor{grayblue}{2.29}) & \textcolor{gray}{9.00} (\textcolor{grayblue}{9.00}) & \textcolor{gray}{0.00} (\textcolor{grayblue}{0.00}) \\

    BT & $\usym{1F5F4}$ & $\usym{1F5F8}$ & $\usym{1F5F4}$ & 47.28 (\textcolor{blue}{52.39}) & 3.00 (\textcolor{blue}{3.00}) & 1.63 (\textcolor{blue}{1.63}) & 37.69 (\textcolor{blue}{38.03}) & 1.00 (\textcolor{blue}{1.00}) & 0.67 (\textcolor{blue}{0.67}) \\
     
    DELETE & $\usym{1F5F4}$ & $\usym{1F5F8}$ & $\usym{1F5F4}$ & 93.07 (\textcolor{blue}{6.60}) & 3.80 (\textcolor{blue}{3.80}) & 2.43 (\textcolor{blue}{2.43}) & 67.91 (\textcolor{blue}{7.81}) & 3.00 (\textcolor{blue}{3.00}) & 1.33 (\textcolor{blue}{1.33}) \\
    
    {\bf \ours} & $\presigmarm$ & $\presigmafg$ & $\presigmafgp$ & 96.48 (\textcolor{blue}{3.19}) & 5.50 (\textcolor{blue}{5.50}) & 0.07 (\textcolor{blue}{0.07}) & 72.50 (\textcolor{blue}{3.22}) & 3.00 (\textcolor{blue}{3.00}) & 0.00 (\textcolor{blue}{0.00}) \\
    \bottomrule
    \end{tabular}}
    \label{tab:exp-timgnet-swint-continual-class}
\end{table*}

\subsection{Continual Unlearning}

\paragraph{Settings.}
We provide evaluations for different MU methods under a practical scenario where multiple unlearning requests are in a continual order.
To be specific, we consider a 4-round continual unlearning.
In each round, 2 classes (1\%) are randomly selected as the forgetting data $\fgdata$, and the forgetting labels at each round are \{18,170\}, \{80,49\}, \{117,51\}, and \{16,118\}, respectively.
We introduce two new metrics $\rm Acc^{tr}_{fgp}$ and $\rm Acc^{te}_{fgp}$, i.e., accuracy on the forgetting data in previous rounds $\fgpdata$, to evaluate whether the model indeed forget data from early unlearning requests.
Results of continual unlearning are shown in Table~\ref{tab:exp-timgnet-swint-continual-class} including multiple mainstream MU methods.

In this continual unlearning, our \ours optimizes to guarantee the unlearning performance on forgetting data from previous rounds $\fgpdata$, where we introduce an additional loss $\objfgp$ to the objective of $J(\bf U)$ in (\ref{eq:opt-obj}) for achieving the task of continual unlearning as follows:
\begin{equation}
\label{eq:opt-obj-continual}
\min_{{\bf U}\in{\rm St}(d,s)}
\underbrace{
\Bigl(
\frac
{{\rm Tr}({\bf U}^\top\presigmafgp{\bf U})}
{{\rm Tr}(\presigmafgp)}
\Bigr)^2}_{\objfgp}
+
\underbrace{
\Bigl(
\frac
{{\rm Tr}({\bf U}^\top\presigmafg{\bf U})}
{{\rm Tr}(\presigmafg)}
\Bigr)^2}_{\objfg}
+
\underbrace{
\Bigl(
\frac
{{\rm Tr}\left(\presigmarm-{\bf U}{\bf U}^\top\presigmarm{\bf U}{\bf U}^\top\right)}
{{\rm Tr}\left(\presigmarm\right)}
\Bigr)^2}_{\objrm}.
\end{equation}
$\objfgp$ takes a similar form as $\objfg$ and measures the projected variance of the covariance matrix $\presigmafgp$ w.r.t. $\fgpdata$ captured within the subspace spanned by $\bf U$. Hence, minimizing $\objfgp$ achieves unlearning on $\fgpdata$.
Note that $\objfgp$ can be efficiently implemented without additional computation, since $\presigmafgp$ has been accessed in previous rounds.

\paragraph{Results.}
During the multiple rounds of unlearning, other MU methods constantly modify the parameters of the pretrained model $\pref$ and directly affect the learning ability of $\pref$.
As a result, in Table~\ref{tab:exp-timgnet-swint-continual-class}, these methods show  significant accuracy drops on the remaining data $\rmdata$ and even slight accuracy increases on the forgetting data  $\fgpdata$ in previous rounds, after 4 rounds of unlearning.
In contrast, our \ours does not update parameters of the pretrained $\pref$, and well preserves the learning ability of $\pref$.
By only optimizing the projection matrix $\bf U$ given $\presigmarm$, $\presigmafg$ and $\presigmafgp$, the learned subspace of \ours successfully maintains the accuracy on $\rmdata$ and reduces the accuracy on $\fgdata$ and $\fgpdata$.

\section{Applications: Face Recognition and Emotion Recognition}
\label{app:sec:application}

Both identities and emotions are key facial attributes closely related to user privacy.
Exploring the unlearning of such information can benefit privacy protection.
Thereby we investigate MU for this two real-world applications of face recognition and emotion recognition.

\begin{table*}[t]
    \centering
    \caption{Training hyper-parameters of different MU methods w.r.t. Table~\ref{tab:exp-vggface2-resnet50} with ResNet50 on VGGFace2.}
    \resizebox{0.85\textwidth}{!}{
    \begin{tabular}{c|cc|c}
    \toprule
    method & $\rmdata$ & $\fgdata$ & hyper-parameters\\
    \midrule
    pretrained & $\usym{1F5F8}$ & $\usym{1F5F8}$ & 200 epochs, lr $\rm ={10}^{-1}$, cosine scheduler \\
    retrained & $\usym{1F5F8}$ & $\usym{1F5F4}$ & 200 epochs, lr $\rm ={10}^{-1}$, cosine scheduler\\
    \midrule
    FT & $\usym{1F5F8}$ & $\usym{1F5F4}$ & 10 epochs, lr $\rm =5\times{10}^{-2}$, cosine scheduler\\
    GA & $\usym{1F5F4}$ & $\usym{1F5F8}$ & 5 epochs, lr $\rm =5\times{10}^{-4}$, constant scheduler\\
    RL & $\usym{1F5F4}$ & $\usym{1F5F8}$ & 10 epochs, lr $\rm =5\times{10}^{-4}$, cosine scheduler\\
    RL & $\usym{1F5F8}$ & $\usym{1F5F8}$ & 10 epochs, lr $\rm =5\times{10}^{-2}$, cosine scheduler\\
    SalUn & $\usym{1F5F4}$ & $\usym{1F5F8}$ & 10 epochs, lr $\rm ={10}^{-4}$, cosine scheduler, saliency sparsity 50\%\\
    SalUn & $\usym{1F5F8}$ & $\usym{1F5F8}$ & 10 epochs, lr $\rm =5\times{10}^{-2}$, cosine scheduler, saliency sparsity 50\%\\
    BT & $\usym{1F5F4}$ & $\usym{1F5F8}$ & 15 epochs, lr $\rm =5\times{10}^{-4}$, cosine scheduler, temperature scalar = 1.0\\
    L2UL & $\usym{1F5F4}$ & $\usym{1F5F8}$ & lr $\rm =1\times{10}^{-4}$, constant scheduler, regularization coefficient = 1.0\\
    DELETE & $\usym{1F5F4}$ & $\usym{1F5F8}$ & 10 epochs, lr $\rm =5\times{10}^{-4}$, cosine scheduler\\
    \midrule
    {\bf\ours} & only $\presigmarm$ & only $\presigmafg$ & 50 steps, $s=21$, lr $\rm = 1$, constant scheduler\\
    \bottomrule
    
    \end{tabular}}
    \label{tab:exp-vggface2-resnet50-hyperparam}
\end{table*}

\begin{table*}[t]
    \centering
    \caption{Comparison results of ResNet50 on VGGFace2, forgetting one of 200 identities.}
    \resizebox{0.85\textwidth}{!}{
    \begin{tabular}{c|cc|cc cc|c|ccc}
    \toprule
    method & $\rmdata$ & $\fgdata$ & $\rm\bf Acc_{rm}^{tr}$ & $\rm\bf Acc_{fg}^{tr}$ & $\rm\bf Acc_{rm}^{te}$ & $\rm\bf Acc_{fg}^{te}$ & {\bf MIA} & Avg.G.$\downarrow$ & RTE $\downarrow$ & \# param. (\%) \\
    \midrule
    pretrained & $\usym{1F5F8}$ & $\usym{1F5F8}$ & 100.00 & 100.00 & 98.15 & 99.49 & 100.00 & - & 5970.95 & 25,557,032 (100\%)\\
    
    retrained & $\usym{1F5F8}$ & $\usym{1F5F4}$ & 100.00 (\textcolor{blue}{0.00}) & 0.00 (\textcolor{blue}{0.00}) & 98.21 (\textcolor{blue}{0.00}) & 0.00 (\textcolor{blue}{0.00}) & 0.00 (\textcolor{blue}{0.00}) & \textcolor{blue}{0.00} & 5870.35 & 25,557,032 (100\%)\\
    \midrule

    GA & $\usym{1F5F4}$ & $\usym{1F5F8}$ & 91.91 (\textcolor{blue}{8.09}) & 0.00 (\textcolor{blue}{0.00}) & 86.34 (\textcolor{blue}{11.87}) & 0.00 (\textcolor{blue}{0.00}) & 0.00 (\textcolor{blue}{0.00}) & \textcolor{blue}{4.00} & 3.35 & 25,557,032 (100\%)\\

    \textcolor{gray}{FT} & \textcolor{gray}{$\usym{1F5F8}$} & \textcolor{gray}{$\usym{1F5F4}$} & \textcolor{gray}{99.84 (\textcolor{grayblue}{0.16})} & \textcolor{gray}{0.00 (\textcolor{grayblue}{0.00})} & \textcolor{gray}{96.22 (\textcolor{grayblue}{1.99})} & \textcolor{gray}{0.00 (\textcolor{grayblue}{0.00})} & \textcolor{gray}{0.00 (\textcolor{grayblue}{0.00})} & \textcolor{grayblue}{0.43} & \textcolor{gray}{294.62}& \textcolor{gray}{25,557,032} (\textcolor{gray}{100\%})\\
    
    RL & $\usym{1F5F4}$ & $\usym{1F5F8}$ & 99.64 (\textcolor{blue}{0.36}) & 11.25 (\textcolor{blue}{11.25}) & 95.27 (\textcolor{blue}{2.94}) & 13.20 (\textcolor{blue}{13.20}) & 0.00 (\textcolor{blue}{0.00}) & \textcolor{blue}{5.55} & 5.59 & 25,557,032 (100\%)\\

    \textcolor{gray}{RL} & \textcolor{gray}{$\usym{1F5F8}$} & \textcolor{gray}{$\usym{1F5F8}$} & \textcolor{gray}{99.11 (\textcolor{grayblue}{0.89})} & \textcolor{gray}{0.00 (\textcolor{grayblue}{0.00})} & \textcolor{gray}{94.66 (\textcolor{grayblue}{3.55})} & \textcolor{gray}{0.00 (\textcolor{grayblue}{0.00})} & \textcolor{gray}{0.00 (\textcolor{grayblue}{0.00})} & \textcolor{grayblue}{0.89} & \textcolor{gray}{303.81}& \textcolor{gray}{25,557,032} (\textcolor{gray}{100\%})\\

    SalUn & $\usym{1F5F4}$ & $\usym{1F5F8}$ & 99.68 (\textcolor{blue}{\bf0.32}) & 5.50 (\textcolor{blue}{5.50}) & 95.59 (\textcolor{blue}{\bf2.62}) & 5.08 (\textcolor{blue}{5.08}) & 0.00 (\textcolor{blue}{0.00}) & \textcolor{blue}{2.70} & 9.99& 12,778,516 (50\%)\\

    \textcolor{gray}{SalUn} & \textcolor{gray}{$\usym{1F5F8}$} & \textcolor{gray}{$\usym{1F5F8}$} & \textcolor{gray}{99.96 (\textcolor{grayblue}{0.04})} & \textcolor{gray}{0.00 (\textcolor{grayblue}{0.00})} & \textcolor{gray}{97.00 (\textcolor{grayblue}{1.21})} & \textcolor{gray}{0.00 (\textcolor{grayblue}{0.00})} & \textcolor{gray}{0.00 (\textcolor{grayblue}{0.00})} & \textcolor{grayblue}{0.25} & \textcolor{gray}{312.71}& \textcolor{gray}{12,778,516} (\textcolor{gray}{50\%})\\

    BT & $\usym{1F5F4}$ & $\usym{1F5F8}$ & 99.39 (\textcolor{blue}{0.61}) & 5.00 (\textcolor{blue}{5.00}) & 94.97 (\textcolor{blue}{3.24}) & 4.06 (\textcolor{blue}{4.06}) & 0.00 (\textcolor{blue}{0.00}) & \textcolor{blue}{2.58} & 8.24& 25,557,032 (100\%)\\
    
    L2UL & $\usym{1F5F4}$ & $\usym{1F5F8}$ & 98.09 (\textcolor{blue}{1.91}) & 2.75 (\textcolor{blue}{2.75}) & 92.88 (\textcolor{blue}{5.33}) & 3.55 (\textcolor{blue}{3.55}) & 0.00 (\textcolor{blue}{0.00}) & \textcolor{blue}{2.71} & 176.69& 25,557,032 (100\%)\\
    
    DELETE & $\usym{1F5F4}$ & $\usym{1F5F8}$ & 99.50 (\textcolor{blue}{0.50}) & 5.50 (\textcolor{blue}{5.50}) & 95.15 (\textcolor{blue}{3.06}) & 5.58 (\textcolor{blue}{5.58}) & 0.00 (\textcolor{blue}{0.00}) & \textcolor{blue}{2.93} & 5.82& 25,557,032 (100\%)\\
    \midrule
    {\bf \ours} & only $\presigmarm$ & only $\presigmafg$ & 97.20 (\textcolor{blue}{2.80}) & 0.00 (\textcolor{blue}{\bf0.00}) & 94.22 (\textcolor{blue}{3.99}) & 0.51 (\textcolor{blue}{\bf0.51}) & 0.00 (\textcolor{blue}{\bf0.00}) & \textcolor{blue}{\bf1.46} & {\bf0.45} &{\bf43,008} ({\bf0.17\%})\\
    \bottomrule
    \end{tabular}}
    \label{tab:exp-vggface2-resnet50}
\end{table*}

\subsection{Face Recognition with Machine Unlearning}

\paragraph{Settings.}
Experiments are conducted on a prevalent face identity recognition dataset VGGFace2 \cite{cao2018vggface2}, including approximately 3.31 million images collected from 9,131 identities.
Particularly, we adopt its publicly-available version\footnote{\href{https://www.kaggle.com/datasets/yakhyokhuja/vggface2-112x112}{https://www.kaggle.com/datasets/yakhyokhuja/vggface2-112x112}}, where all face images are aligned and cropped to $112\times112\times3$.
Our experiments utilize a filtered set of 200 identities, each with over 500 images.
Then, we randomly select 400 images for each identity, forming a training set containing 80,000 images, and the rest is adopted as the test set.
One of the 200 identities (0.5\%) is randomly selected as the forgetting data $\fgdata$, which is a challenging setting for class-centric unlearning.
The adopted neural network model is ResNet50 \cite{he2016deep}.
All the compared methods use the SGD optimizer \cite{bottou2012stochastic} with a weight decay of $5\times10^{-4}$, the momentum of 0.9, and a batch size of 128.
More training details are in Table~\ref{tab:exp-vggface2-resnet50-hyperparam} with comparison results in Table~\ref{tab:exp-vggface2-resnet50}.

\paragraph{Results.}
This unlearning setting, i.e., forgetting one of the 200 identities, implies a substantial size of $\rmdata$.
Therefore, MU methods  involving $\rmdata$ can easily achieve the ideal performance of the exact unlearning model of $\exactf$, as illustrated by the FT, RL (w/ $\rmdata$), and SalUn (w/ $\rmdata$) in Table~\ref{tab:exp-vggface2-resnet50}.
In contrast, it is challenging for MU methods by only accessing $\fgdata$ under this unlearning setting, as demonstrated by the relatively high accuracy on $\fgdata$ of RL (w/o $\rmdata$), SalUn (w/o $\rmdata$), BT, L2UL and DELETE in Table~\ref{tab:exp-vggface2-resnet50}.
Our \ours does not visit the original samples and learns to preserve and remove essential patterns from the feature covariance matrices of $\rmdata$ and $\fgdata$ through low-dimensional projections, managing to simultaneously maintain high accuracy on $\rmdata$ and suppress accuracy on $\fgdata$.

\subsection{Emotion Recognition with Machine Unlearning}

\paragraph{Settings.}
Experiments are conducted on a popular emotion recognition dataset RAF-DB \cite{li2017reliable},
RAF-DB contains 12,271 and 3,068 training and test images with a size of $100\times100\times3$, respectively, within 7 different emotions: ``surprise'', ``fear'', ``disgust'', ``happiness'', ``sadness'', ``anger'' and ``neutral''.
In the experiments, one emotion is randomly selected as the forgetting data $\fgdata$ (``fear'').
The adopted model is ResNet18 \cite{he2016deep}.
All the compared methods use the SGD optimizer \cite{bottou2012stochastic} with a weight decay of $5\times10^{-4}$, the momentum of 0.9, and a batch size of 128.
More training details are listed in  Table~\ref{tab:exp-rafdb-resnet18-hyperparam}.
The comparison results among different MU methods are shown in Table~\ref{tab:exp-rafdb-resnet18}.

\paragraph{Results.}
As illustrated in Table~\ref{tab:exp-rafdb-resnet18}, when forgetting the ``fear'' emotion from the total 7 emotions, \ours shows the best approximation performance towards that of the retrained model $\exactf$.
Other MU methods either  heavily rely on the remaining data $\rmdata$ (RL and SalUn), or show unsatisfactory results on $\fgdata$ (BT, L2UL and DELETE).
\ours explores the low-dimensional subspace in which the  features of $\fgdata$ and $\rmdata$ can be well distinguished through its reconstruction performances, achieving distinctively better unlearning performances.

\begin{table*}[t]
    \centering
    \caption{Training hyper-parameters of different MU methods w.r.t. Table~\ref{tab:exp-rafdb-resnet18} with ResNet18 on RAF-DB.}
    \resizebox{0.85\textwidth}{!}{
    \begin{tabular}{c|cc|c}
    \toprule
    method & $\rmdata$ & $\fgdata$ & hyper-parameters\\
    \midrule
    pretrained & $\usym{1F5F8}$ & $\usym{1F5F8}$ & 200 epochs, lr $\rm ={10}^{-1}$, cosine scheduler \\
    retrained & $\usym{1F5F8}$ & $\usym{1F5F4}$ & 200 epochs, lr $\rm ={10}^{-1}$, cosine scheduler\\
    \midrule
    FT & $\usym{1F5F8}$ & $\usym{1F5F4}$ & 10 epochs, lr $\rm =5\times{10}^{-2}$, cosine scheduler\\
    GA & $\usym{1F5F4}$ & $\usym{1F5F8}$ & 10 epochs, lr $\rm =5\times{10}^{-4}$, constant scheduler\\
    RL & $\usym{1F5F4}$ & $\usym{1F5F8}$ & 20 epochs, lr $\rm =5\times{10}^{-4}$, cosine scheduler\\
    RL & $\usym{1F5F8}$ & $\usym{1F5F8}$ & 10 epochs, lr $\rm =5\times{10}^{-2}$, cosine scheduler\\
    SalUn & $\usym{1F5F4}$ & $\usym{1F5F8}$ & 20 epochs, lr $\rm ={10}^{-4}$, cosine scheduler, saliency sparsity 50\%\\
    SalUn & $\usym{1F5F8}$ & $\usym{1F5F8}$ & 10 epochs, lr $\rm =5\times{10}^{-2}$, cosine scheduler, saliency sparsity 50\%\\
    BT & $\usym{1F5F4}$ & $\usym{1F5F8}$ & 15 epochs, lr $\rm =5\times{10}^{-4}$, cosine scheduler, temperature scalar = 1.0\\
    L2UL & $\usym{1F5F4}$ & $\usym{1F5F8}$ & lr $\rm =2\times{10}^{-4}$, constant scheduler, regularization coefficient = 1.0\\
    DELETE & $\usym{1F5F4}$ & $\usym{1F5F8}$ & 20 epochs, lr $\rm =2\times{10}^{-4}$, cosine scheduler\\
    \midrule
    {\bf\ours} & only $\presigmarm$ & only $\presigmafg$ & 50 steps, $s=6$, lr $\rm = 1$, constant scheduler\\
    \bottomrule
    
    \end{tabular}}
    \label{tab:exp-rafdb-resnet18-hyperparam}
\end{table*}

\begin{table*}[t]
    \centering
    \caption{Comparison results of ResNet18 on RAF-DB.}
    \resizebox{0.85\textwidth}{!}{
    \begin{tabular}{c|cc|cc cc | c|c}
    \toprule
    method & $\rmdata$ & $\fgdata$ & $\rm\bf Acc_{rm}^{tr}$ & $\rm\bf Acc_{fg}^{tr}$ & $\rm\bf Acc_{rm}^{te}$ & $\rm\bf Acc_{fg}^{te}$ & {\bf MIA} & Avg.G.\\
    \midrule
    pretrained & $\usym{1F5F8}$ & $\usym{1F5F8}$ & 100.00 & 100.00 & 85.10 & 56.76 & 100.00 & - \\
    
    retrained & $\usym{1F5F8}$ & $\usym{1F5F4}$ & 100.00 (\textcolor{blue}{0.00}) & 0.00 (\textcolor{blue}{0.00}) & 84.90 (\textcolor{blue}{0.00}) & 0.00 (\textcolor{blue}{0.00}) & 0.00 (\textcolor{blue}{0.00}) & \textcolor{blue}{0.00}\\
    \midrule
    
    GA & $\usym{1F5F4}$ & $\usym{1F5F8}$ & 91.38 (\textcolor{blue}{8.62}) & 0.36 (\textcolor{blue}{0.36}) & 76.75 (\textcolor{blue}{8.15}) & 0.00 (\textcolor{blue}{0.00}) & 0.36 (\textcolor{blue}{0.36}) & \textcolor{blue}{3.50}\\

    \textcolor{gray}{FT} & \textcolor{gray}{$\usym{1F5F8}$} & \textcolor{gray}{$\usym{1F5F4}$} & \textcolor{gray}{99.97 (\textcolor{grayblue}{0.03})} & \textcolor{gray}{0.00 (\textcolor{grayblue}{0.00})} & \textcolor{gray}{83.97 (\textcolor{grayblue}{0.03})} & \textcolor{gray}{0.00 (\textcolor{grayblue}{0.00})} & \textcolor{gray}{0.00 (\textcolor{grayblue}{0.00})} & \textcolor{grayblue}{0.01}\\

    RL & $\usym{1F5F4}$ & $\usym{1F5F8}$ & 98.96 (\textcolor{blue}{1.04}) & 15.66 (\textcolor{blue}{15.66}) & 79.93 (\textcolor{blue}{4.97}) & 5.41 (\textcolor{blue}{5.41}) & 0.00 (\textcolor{blue}{0.00}) & \textcolor{blue}{5.42}\\

    \textcolor{gray}{RL} & \textcolor{gray}{$\usym{1F5F8}$} & \textcolor{gray}{$\usym{1F5F8}$} & \textcolor{gray}{99.92 (\textcolor{grayblue}{0.08})} & \textcolor{gray}{0.00 (\textcolor{grayblue}{0.00})} & \textcolor{gray}{84.10 (\textcolor{grayblue}{0.80})} & \textcolor{gray}{0.00 (\textcolor{grayblue}{0.00})} & \textcolor{gray}{0.00 (\textcolor{grayblue}{0.00})} & \textcolor{grayblue}{0.18}\\

    SalUn & $\usym{1F5F4}$ & $\usym{1F5F8}$ & 99.83 (\textcolor{blue}{0.17}) & 12.46 (\textcolor{blue}{12.46}) & 82.53 (\textcolor{blue}{2.37}) & 2.70 (\textcolor{blue}{2.70}) & 0.36 (\textcolor{blue}{0.36}) & \textcolor{blue}{3.61}\\

    \textcolor{gray}{SalUn} & \textcolor{gray}{$\usym{1F5F8}$} & \textcolor{gray}{$\usym{1F5F8}$} & \textcolor{gray}{99.98 (\textcolor{grayblue}{0.02})} & \textcolor{gray}{0.00 (\textcolor{grayblue}{0.00})} & \textcolor{gray}{84.57 (\textcolor{grayblue}{0.33})} & \textcolor{gray}{0.00 (\textcolor{grayblue}{0.00})} & \textcolor{gray}{0.00 (\textcolor{grayblue}{0.00})} & \textcolor{grayblue}{0.07} \\

    BT & $\usym{1F5F4}$ & $\usym{1F5F8}$ & 99.83 (\textcolor{blue}{0.17}) & 10.68 (\textcolor{blue}{10.68}) & 82.63 (\textcolor{blue}{2.27}) & 2.70 (\textcolor{blue}{2.70}) & 1.07 (\textcolor{blue}{1.07}) & \textcolor{blue}{3.38}\\
    
    L2UL & $\usym{1F5F4}$ & $\usym{1F5F8}$ & 99.99 (\textcolor{blue}{0.01}) & 10.68 (\textcolor{blue}{10.68}) & 84.03 (\textcolor{blue}{0.87}) & 1.35 (\textcolor{blue}{1.35}) & 0.00 (\textcolor{blue}{0.00}) & \textcolor{blue}{2.58}\\
    
    DELETE & $\usym{1F5F4}$ & $\usym{1F5F8}$ & 99.72 (\textcolor{blue}{0.28}) & 9.25 (\textcolor{blue}{9.25}) & 82.20 (\textcolor{blue}{2.70}) & 2.70 (\textcolor{blue}{2.70}) & 1.42 (\textcolor{blue}{1.42}) & \textcolor{blue}{3.27}\\
    \midrule
    {\bf \ours} & only $\presigmarm$ & only $\presigmafg$ & 100.00 (\textcolor{blue}{\bf0.00}) & 0.00 (\textcolor{blue}{\bf0.00}) & 84.90 (\textcolor{blue}{\bf0.00}) & 0.00 (\textcolor{blue}{\bf0.00}) & 0.00 (\textcolor{blue}{\bf0.00}) & \textcolor{blue}{\bf0.00}\\
    \bottomrule
    \end{tabular}}
    \label{tab:exp-rafdb-resnet18}
\end{table*}

\section{Low-Dimensional Feature Subspaces for MU in Generation Tasks}
\label{app:sec:generation}

In this section, we provide a preliminary exploration of using low-dimensional feature subspaces for machine unlearning in generation tasks.
As our \ours does not update the parameters of the pretrained model $\pref$, \ours is not directly applicable to models in generation tasks. 
Meanwhile, there are distinctively different generation models in different tasks.
For example, in image generation, the convolution network in DDPM \cite{ho2020denoising} and LDM \cite{rombach2022high} significantly differs from the vision transformer in DiT \cite{peebles2023scalable}, while the transformer itself functions differently between image generation and language-related generation tasks \cite{liu2025rethinking}.
Another challenge is that generative models are particularly sensitive to internal changes in their feature representations. For instance, even subtle alterations in the intermediate layers of a DDPM can lead to artifacts or meaningless pixels in the generated images.
Therefore, crafting an appropriate low-dimensional feature subspace must be carefully tailored to the specific generative model and the particular unlearning scenario.
This direction, to the best of our knowledge, still remains underexplored.
In the following, we present promising unlearning results from \ours applied to a {\it machine translation} task.
This initial success suggests the potential of our feature subspace perspective for MU in generative tasks, and we hope it will advocate further research in this area.

\paragraph{Settings.}
We evaluate \ours on a machine translation task with an unlearning target as {\it forcing the model to forget the ability of translating a specific language}. 
This natural language processing task is not a classification one and the subspace shall not be tightly coupled with class-specific features. 
The experiment is conducted on the WMT-19 dataset\footnote{https://huggingface.co/datasets/wmt/wmt19} and a modern large model named {\tt m2m100\_1.2B}\footnote{https://huggingface.co/facebook/m2m100\_1.2B}. 
The pretrained {\tt m2m100\_1.2B} model is fine-tuned in a parameter-efficient way (i.e., LoRA) on two language pairs from WMT-19: translating Czech into English and translating Lithuanian into English. 
This fine-tuned {\tt m2m100\_1.2B} model is adopted as the initial startpoint to be unlearned, and the to-be-unlearned language is Lithuanian.

\paragraph{Implementation.}
With this modern large model of {\tt m2m100\_1.2B}, we apply \ours to the {\it hidden states} from the output of the encoder of {\tt m2m100\_1.2B}. 
Specifically, the token-level hidden states get weighted averaged by the self-attention from the encoder and lead to weighted averaged 1024-dimensional features, where 1024 is the hidden dimension of {\tt m2m100\_1.2B}. 
In this way, we avoid the massive tokens and extremely large vocabulary size and focus on the attention-based weighted average features to apply \ours. 
Then, two feature covariance matrices are calculated w.r.t. Czech and Lithuanian inputs, and the optimization objective of \ours in \eqref{eq:opt-obj} is conducted to learn a projection matrix. 
In inference, the projector is employed at the token-level hidden states to proceed the propagation, similar as the way in Sec.\ref{sec:implement}.

\paragraph{Results.}
Our results are presented in the following Table \ref{tab:machine-translation}. 
We evaluate translation quality using the BLEU score \cite{papineni2002bleu} on the WMT-19 validation set for Czech-to-English (cs$\rightarrow$en) and Lithuanian-to-English (lt$\rightarrow$en) directions. 
The forgetting data (fg) and remaining (rm) data correspond to the lt$\rightarrow$en and cs→en pairs, respectively. 
Here, fully-trained denotes {\tt m2m100\_1.2B} fine-tuned on both language pairs, while retrained refers to {\tt m2m100\_1.2B} trained exclusively on the remaining (cs$\rightarrow$en) data without seeing (lt$\rightarrow$en) inputs. 
Blue numbers in parentheses denote the BLEU score gap relative to the retrained method (lower is better). 
After unlearning, our \ours achieves performance on the forgetting language pair (lt$\rightarrow$en) that is very close to the retrained model, implying successful forgetting of the Lithuanian-to-English translation capability. 
Meanwhile, \ours maintains good performance on the remaining Czech-to-English direction, demonstrating effective knowledge preservation. 
As an exemplary trial on large language models, this experiment validates \ours's potential for extension to non-classification tasks and its scalability to modern large models.

\begin{table*}[h]
\centering
\caption{Results of \ours on machine translation with a modern large model of {\tt m2m100\_1.2B}.}
\begin{tabular}{c|cc}
\toprule
method & cs$\rightarrow$en (rm) & lt$\rightarrow$en (fg) \\
\midrule
fully-trained & 27.63 & 17.54 \\
retrained & 26.19 & 12.31 \\
GA & 23.51 (\textcolor{blue}{2.68}) & 15.79 (\textcolor{blue}{3.48}) \\
\ours & 22.88 (\textcolor{blue}{3.31}) & 13.04 (\textcolor{blue}{0.73}) \\
\bottomrule
\end{tabular}
\label{tab:machine-translation}
\end{table*}

\section{Further Discussions on Separability, Scalability and Privacy Protection}
\label{app:sec:discussion}

\paragraph{Separability.}
\ours is built on the separability between $\fgdata$ and $\rmdata$ in low-dimensional feature subspaces in the general unlearning context.
We highlight that the discussed separability is different from the separability in supervised classification.
Given $\pref$ pretrained on the full data, its learned features of $\fgdata$ and $\rmdata$ are separable according to the supervised classification labels.
However, considering the separability between $\fgdata$ and $\rmdata$ in low-dimensional subspaces without the labels, our reconstruction analysis in Sec.\ref{sec:low-feat-subspace} validates that their features from $\pref$ are indistinguishable as both $\fgdata$ and $\rmdata$ have been involved into training, while the retrained model $\exactf$ demonstrates separability.
This motivates our \ours by investigating whether it is possible to formulate linear transformations (subspace learning) to features from $\pref$, such that the knowledge of forgetting features can be diminished and that of remaining features is meanwhile well kept.
Apart from approaching the outputs or parameters of the exact retrained model, it would be an interesting and promising direction to formulate an unlearning model that pertains similar properties in feature (sub)spaces, which we hope could bring new insights and opportunities to approximate machine unlearning.

\paragraph{Scalability.}
The two covariance matrices in \ours only require one-shot fetching to features of raw data, which is achieved within one network forward pass.
We highlight that this one-shot feature fetching is a key advantage of \ours for its scalability. 
Mainstream unlearning methods face significant scalability bottlenecks: they either {\it (i)} require {\it repeated} forward-{\it backward} passes on full data with gradients calculations, which is prohibitively expensive for large-scale data, or {\it (ii)} rely solely on the forgetting data for efficiency, but leading to limited performance due to the absence of the remaining data information. 
In contrast, \ours processes the training data in a {\it single forward} pass, eliminating the need for backward propagation. 
We think such computational complexity of \ours is acceptable for deep learning in practice and makes \ours particularly suitable for large-scale data scenarios.
For example, the ImageNet-1K dataset includes over 1.28M training images.
Existing methods that calculates gradients on both forgetting and remaining data with multiple epochs face nearly prohibitive calculations on ImageNet-1K, while \ours can be efficiently and effectively executed by iterating the ImageNet-1K training dataset only once.
Therefore, \ours strikes a favorable balance and achieves effective unlearning by leveraging the forgetting and remaining feature covariance matrices through a single, computationally-manageable forward pass, thereby offering remarkable scalability in practical applications.

\paragraph{Privacy protection against white-box attacks.}
One may argue that the privacy protection of \ours can be easily broken under {\it white-box} attacks, where attackers are aware of the complete information of \ours, including {\it (i)} the backbone, {\it (ii)} our plug-in module, and {\it (iii)} the position of our plug-in module.
In this case, a simple erasure on the plug-in projection module of \ours exposes the vanilla pretrained model.
We discuss this potential white-box attack from the following aspects.
\begin{itemize}
    \item  In principle, any unlearning method would be vulnerable to privacy leakage if white-box attacks happen, as white-box attackers can access model parameters and their gradients and hold full knowledge of the internal unlearning mechanism. 
    White-box attacks are extremely risky scenarios with privacy leakage and are somehow impractical.
    \item The plug-in module of \ours can be absorbed into the network parameters with only single model released. 
    The key of \ours lies in learning projectors for features, which can be conducted at different layers of the pretrained network. 
    This learned projection matrix $\bf U$  can be seamlessly integrated into the layer parameters of the network. 
    For example, when applied to the penultimate layer (default setting), our projectors $\bf UU^\top$ can be directly absorbed into the weight $\bf W$ of the last linear layer, i.e., $\bf W\leftarrow WUU^\top$. 
    Thus, in practice, this allows us to release a single model without maintaining a separate projection module $\bf UU^\top$. 
    Further note that this absorption is similarly applicable across different layers for \ours with a single updated model released. 
    By such implementation, some simple tests, such as a module-level erasure test, are not that simple and straightforward, as one has to be able to decouple different layers of the released single model and then to test through them to figure out the implementation position of the module for next steps.
    \item We would like to highlight another advantage of \ours in privacy protection particularly during the unlearning stage. 
    Existing methods iteratively visit the raw data and compute the parameter gradients of the whole network, hence the attackers could have more chances to access such process to steal raw data or get/estimate gradients, posing privacy leakage during unlearning.
    Our \ours does not require iterative accesses to the raw data nor model parameters/gradients and thus has less exposure to model information, which could be to some extent a way to mitigate privacy leakage in this sense. 
    This property should not be overlooked.
\end{itemize}


\end{document}